\documentclass[journal,review]{IEEEtran}
\usepackage{amsmath,amsfonts}
\usepackage{array}
\usepackage[caption=false,font=normalsize,labelfont=sf,textfont=sf]{subfig}
\usepackage{diagbox}
\usepackage{textcomp}
\usepackage{stfloats}
\usepackage{url}
\usepackage{verbatim}
\usepackage{graphicx}
\usepackage{amsmath}
\usepackage{multirow}
\usepackage[colorlinks=true, allcolors=blue]{hyperref}
\usepackage{array} % 表格居中
\usepackage{float}
\usepackage{filecontents}
\usepackage{booktabs}
\usepackage[table]{xcolor}
\usepackage{algorithm}
\usepackage{algpseudocode}
\usepackage{wrapfig} % 用于插入biography
\usepackage{flushend} % 用于让最后一页的biography以行对齐
\usepackage{mathtools}
\usepackage{graphicx,amsfonts,amsmath,amsthm,color,stfloats}
\usepackage[caption=false,font=normalsize,labelfont=sf,textfont=sf]{subfig}
\usepackage{placeins}

\usepackage{amssymb}
\usepackage{booktabs} 
\newtheorem{theorem}{Theorem}
\newtheorem{corollary}{Corollary}
\newtheorem{proposition}{Proposition}
\newtheorem{lemma}{Lemma}
\newtheorem*{remark}{Remark}
\newcommand{\rF}{\rm F}

\usepackage{algpseudocode} 
\newcommand{\argmin}{\mathop{\mathrm{argmin}}}
\newcommand{\argmax}{\mathop{\mathrm{argmax}}}

\def\BibTeX{{\rm B\kern-.05em{\sc i\kern-.025em b}\kern-.08em
    T\kern-.1667em\lower.7ex\hbox{E}\kern-.125emX}}

\begin{document}
\title{Regularized Optimization on Grassmann Manifold: Theory, Algorithm and Applications}
%Sparse  Projection Matrix Approximation 
%with Applications to Community Detection}
\author{Zhuan~Liang and Zheng~Zhai
\IEEEcompsocitemizethanks{
\IEEEcompsocthanksitem Zhuan Liang and Zheng Zhai are with Department of Statistics, Faculty of Arts and Sciences at Beijing Normal University, Zhuhai. This work was supported by the National Natural Science Foundation of China under Grant of 12301478.
\IEEEcompsocthanksitem Corresponding author: Zheng Zhai. E-mail:zhaizheng@bnu.edu.cn
}

}
\maketitle
%\begin{abstract}
\begin{abstract}
Spectral methods are among the most widely used techniques for community detection, clustering, and graph learning. Their performance, however, critically depends on the accurate estimation of the underlying spectral subspace and can deteriorate substantially in the presence of noise, outliers, or model perturbations. To address this limitation, we propose a Regularized Projection Matrix Approximation (RPMA) framework for robust estimation of rank-(K) projection matrices. RPMA extends classical spectral projection by incorporating a regularization term, producing projection estimates that are more robust, sparse, and interpretable.

We formulate the proposed model as an optimization problem on the manifold of rank-(K) projection matrices and exploit its geometric equivalence to the Grassmann manifold. Based on this manifold characterization, we derive the first- and second-order optimality conditions, establish the local stability of the regularized leading eigenspace, and characterize the stability of the critical-point landscape under sufficiently small regularization. To efficiently solve the resulting nonconvex optimization problem, we develop a Riemannian gradient projection algorithm with backtracking line search, together with a more efficient Cayley–Sherman–Morrison–Woodbury (Cayley–SMW) gradient method that avoids repeated eigendecompositions.

Extensive experiments on both synthetic and real-world datasets demonstrate that RPMA substantially improves the recovery accuracy of projection matrices and consistently outperforms conventional spectral projection methods for community detection and clustering under noisy environments.

%\end{abstract}
%This paper introduces a sparse regularized projection matrix approximation model for recovering cluster structures from affinity matrices. The model is formulated as a projection approximation problem with an entry-wise sparsity penalty to encourage sparse solutions. We propose two algorithms to solve this problem: one involves direct optimization on the Stiefel manifold using the Cayley transformation, while the other employs the Alternating Direction Method of Multipliers (ADMM). Numerical experiments on both synthetic and real-world datasets demonstrate that our regularized projection matrix approximation approach significantly outperforms state-of-the-art methods in clustering accuracy and performance.
\end{abstract}

\begin{IEEEkeywords} 
 Grassmann Manifold, Optimization, Projection Matrix Approximation
\end{IEEEkeywords}

\section{Introduction   }
%\IEEEPARstart{C}{ommunity} detection is a crucial problem in unsupervised learning that has garnered attention from researchers across various disciplines, including mathematics, statistics,  physics, and social sciences. The objective is to partition  $n$ data points into $K$ groups based on their pairwise similarities, represented by a similarity matrix $A\in {\mathbb R}^{n×n}$. A prevalent approach to tackle this problem involves first deriving a lower-dimensional representation of the data from $A$. Subsequently, a clustering algorithm such as $k$-means or the EM algorithm is applied to identify the clusters.  The success of this method depends on the quality of the data representation and the accuracy of the computational methods used for $A$.

\IEEEPARstart{C}{ommunity} detection is a fundamental problem in unsupervised learning that has attracted considerable attention from researchers in a wide range of disciplines, including mathematics, statistics, physics, and the social sciences. The goal is to partition \(n\) data points into \(K\) communities based on their pairwise similarities, which are typically represented by a similarity matrix \(A\in\mathbb{R}^{n\times n}\). A widely adopted strategy for addressing this problem is to first extract a low-dimensional representation from \(A\), and then apply a clustering algorithm, such as \(k\)-means or the expectation--maximization (EM) algorithm, to identify the underlying communities. The effectiveness of this approach critically depends on the quality of the extracted representation, which in turn relies on both the structural properties of the similarity matrix and the accuracy of the computational methods used to process it.

\begin{comment}
\IEEEPARstart{C}{ommunity} detection is a fundamental problem in unsupervised learning and has attracted extensive attention in statistics, machine learning, network science, and social sciences. Given a similarity or adjacency matrix \(A\in\mathbb{R}^{n\times n}\) that characterizes pairwise relationships among \(n\) objects, the objective is to partition the objects into \(K\) communities such that objects within the same community are highly similar, while those from different communities exhibit weaker connections. Community detection plays a central role in numerous applications, including social network analysis, image segmentation, bioinformatics, recommendation systems, and document clustering.
\end{comment}

A widely adopted paradigm for community detection is spectral clustering. The key idea is to construct a low-dimensional embedding from the leading eigenvectors of the similarity matrix and subsequently apply a clustering algorithm, such as \(k\)-means, to identify the underlying communities. Owing to its simplicity, scalability, and strong theoretical guarantees, spectral clustering has become one of the most successful approaches for community detection. Nevertheless, its performance critically depends on the quality of the estimated spectral subspace. In practice, the observed similarity matrix is often contaminated by noise, outliers, or missing observations, which may substantially deteriorate the accuracy of the spectral embedding and consequently lead to unreliable clustering results. 

Spectral clustering computes the leading \(K\) eigenvectors \(U\in\mathbb{R}^{n\times K}\) of the affinity matrix \(A\) and performs clustering on the rows of \(U\). However, the embedding \(U\) is not unique: for any orthogonal matrix \(R\in\mathbb{R}^{K\times K}\), the transformed embedding \(UR\) preserves all pairwise Euclidean distances between rows and therefore yields exactly the same clustering result. Consequently, rather than seeking an optimal embedding \(U\), it is more natural to consider the projection matrix $X=UU^\top,$
which is invariant under orthogonal transformations of the embedding, i.e.,$(UR)(UR)^\top=UU^\top$. This representation eliminates the rotational ambiguity inherent in spectral embeddings and provides a unique characterization of the underlying \(K\)-dimensional subspace. An additional advantage of optimizing over the projection matrix \(X\) is that it naturally facilitates the incorporation of desirable structural priors through entrywise regularization. Such regularization effectively reduces the search space to projection matrices that better reflect the intrinsic structure of the clustering problem. To illustrate this point, consider an ideal affinity matrix defined by
\[
A_{ij}=
\begin{cases}
1, & c(i)=c(j),\\
0, & \text{otherwise},
\end{cases}
\]
where \(c(i)\) denotes the cluster label of the \(i\)-th sample and \(n_k\) is the size of the \(k\)-th cluster. The corresponding spectral projection matrix, up to a simultaneous permutation of its rows and columns, is given by
\[
X=
\begin{bmatrix}
\frac{1}{n_1}\mathbf{1}_{n_1\times n_1} & \mathbf{0} & \cdots & \mathbf{0}\\
\mathbf{0} & \frac{1}{n_2}\mathbf{1}_{n_2\times n_2} & \cdots & \mathbf{0}\\
\vdots & \vdots & \ddots & \vdots\\
\mathbf{0} & \mathbf{0} & \cdots & \frac{1}{n_K}\mathbf{1}_{n_K\times n_K}
\end{bmatrix},
\]
where \(\mathbf{1}_{n\times n}\) denotes the all-ones matrix. This ideal projection possesses several desirable structural properties: it is entrywise nonnegative, every entry is bounded by
$\frac{1}{\min_{1\le k\le K} n_k},$
and it exhibits a sparse block-diagonal structure with zero entries between different clusters. These observations suggest that high-quality projection matrices should approximately inherit these properties even in the presence of noise. Therefore, it is natural to incorporate suitable entrywise regularization into the projection matrix approximation framework to promote boundedness, nonnegativity, and sparsity, thereby improving the robustness and interpretability of the recovered spectral projection.

Motivated by the special structure of $X$, we propose a regularized projection
matrix approximation (RPMA) framework for recovering low-rank
projection matrices:

\begin{equation}
    \min_{X\in \mathcal P_K}\|X-A\|_{\rm F}^2+\lambda R(X).
\end{equation}
Here, 
$\mathcal P_K = \{X \in \mathbb{R}^{n \times n} : X = X^\top,\; X^2 = X,\; \operatorname{tr}(X) = K\}$ 
is the set of rank-$K$ orthogonal projection matrices, and 
$R(\cdot)$ is a regularization term that promotes desirable structural properties---such as sparsity, grouping, or smoothness---in the solution $X$. 
Since $\|X\|_{\rm F}^2 = K$ for all $X \in \mathcal P_K$, the RPMA objective is equivalent (up to a constant) to minimizing
\[
F_\lambda(X) = -2\langle X, A \rangle + \lambda R(X),\quad X\in\mathcal P_K.
\]

The proposed model augments the classical projection approximation formulation with a regularization term, thereby encouraging sparse and interpretable projection structures while simultaneously suppressing spurious noise. By choosing appropriate regularizers, such as the Huber penalty, the proposed approach is able to effectively mitigate the influence of outliers and recover cleaner projection matrices than those produced by conventional spectral projection methods.

%The optimization problem is formulated over the manifold of rank-\(K\) projection matrices, which is intrinsically related to the Grassmann manifold. Exploiting this geometric structure, we establish a comprehensive theoretical framework for the proposed model. In particular, we characterize the geometry of the projection matrix manifold, derive first- and second-order optimality conditions, and analyze the landscape of the regularized objective. Furthermore, we develop efficient manifold optimization algorithms, including a gradient projection method and a low-rank Cayley-transform-based Riemannian gradient method, both of which preserve feasibility throughout the iterations.

%Experiments on synthetic and real dataset demonstrate that the proposed RPMA framework consistently produces cleaner projection matrices and achieves superior clustering performance compared with conventional spectral methods, especially in the presence of sparse and heterogeneous noise.

The main contributions of this paper are summarized as follows:
\begin{itemize}
    \item We propose a novel regularized projection matrix approximation framework for robust community detection and clustering.
    
    \item We establish the geometric properties of the rank-K projection matrix manifold and derive the first- and second-order optimality conditions for the proposed model. Furthermore, we characterize the stability of the leading eigenspace and analyze the global landscape of all critical points.
    
    \item We develop Riemannian optimization algorithms based on gradient projection and further accelerate the optimization by employing Cayley transformations, which avoid the repeated eigendecompositions required by the gradient projection method.
    
    \item We provide experiments on synthetic and real dataset demonstrating that the proposed regularization significantly improves the recovery accuracy of projection matrices and enhances clustering performance.
\end{itemize}

\subsection{Related works}
%Low-rank matrix optimization with additional structural constraints is a common problem in machine learning and signal processing \cite{zhang2012low,zhang2012inducible}. The problem aims to find the best low-rank matrix approximation that also satisfies certain structural constraints, such as non-negativity, symmetry, boundedness, simplex and sparsity. There are two different appraoches to achieve this target. Firstly, these constraints can be achieved via matrix factorization with explicit constraints such as non-negative matrix factorization~\cite{lee1999learning}, semi-nonnegative matrix factorization~\cite{ding2008convex}, bounded low-rank matrix approximation~\cite{kannan2012bounded}, bounded projection matrix approximation~\cite{BPMA}.  Secondly, these constraints also can be achieved via the soft-regularization term routine such as simultaneously low-rank and sparse matrix approximation~\cite{zhang2022graph,ji2011robust,richard2012estimation,nie2016constrained}. These works, however, only seek a low-rank matrix as a low-dimensional embedding of the input similarity matrix. However, in this work, we no longer aim at learning the embedding but aim at recover a high quality group connection matrix, which is not necessarily a projection matrix through regularized projection matrix approximation.

The success of clustering heavily depends on the quality and cleanness of the affinity matrix. Numerous approaches have been proposed to refine or distill useful information from such matrices. Among them, low-rank approximation under additional structural constraints stands out as one of the most popular and effective strategies, with wide applications in machine learning and signal processing~\cite{zhang2012low,zhang2012inducible}. The objective is to find the best low-rank approximation of a given matrix that simultaneously satisfies certain structural properties, including non-negativity~\cite{pan2019generalized}, symmetry, boundedness~\cite{thanh2022bounded}, simplex constraints~\cite{abdolali2021simplex}, and sparsity~\cite{zhai2024sparse}. Two primary approaches have emerged to achieve this goal.
The first enforces structural constraints directly through matrix factorization with explicit constraints. Representative methods include non-negative matrix factorization~\cite{lee1999learning}, semi-nonnegative matrix factorization~\cite{ding2008convex}, bounded low-rank matrix approximation~\cite{kannan2012bounded}, and bounded projection matrix approximation~\cite{BPMA}. The second incorporates structural constraints via soft regularization terms added to the objective function, encompassing techniques such as simultaneous low-rank and sparse matrix approximation~\cite{zhang2022graph,ji2011robust,richard2012estimation,nie2016constrained}.
In contrast, our work pursues a distinct objective: rather than learning an embedding, we seek to recover a high-quality group connection matrix through regularized projection matrix approximation.

\section{Constraints and Penalties }

In this section, we detail several concrete instantiations of the penalty function $R(\cdot)$, each tailored to enforce a distinct structural prior on the solution. Depending on the functional form of $R(\cdot)$, the resulting regularization schemes naturally partition into two fundamental categories: \emph{entry-independent} penalties and \emph{entry-dependent} penalties. The former apply separable, element-wise operations to $X$---ideal for promoting local traits such as sparsity, non-negativity, or boundedness. The latter, however, couple multiple entries through global functions, enabling the imposition of holistic properties like row-sum constraints or energy normalization that transcend simple componentwise treatment.

Entrywise penalties are applied independently to each element of the matrix $X$ and are well-suited for enforcing local properties such as non-negativity, boundedness, or sparsity at the individual entry level. These constraints can be readily implemented by specifying $R(X):=\sum_{i,j}g(X_{i,j})$ as a separable function that acts element-wise on $X$.  

In contrast, certain structural requirements involve the collective behavior of multiple entries rather than isolated elements. For instance, constraints like normalizing the sum of each row to a prescribed constant cannot be captured by entrywise operations alone. Such scenarios call for a penalty function that depends on an aggregation of entries—e.g., a function of row-wise sums—thereby necessitating a more global formulation beyond the scope of separable penalties.

%use propose the penalty formsDepending on various motivations for $X$, such as non-negativity, boundedness, and sparsity, we have the flexibility to choose different formulations of $g(\cdot)$. In what follows, we give three different types of $g(\cdot)$:

\subsection{Entry-independent Penalties}
Some desired properties can be achieved by employing penalties that act independently on the entries of $X$. Such properties include boundedness, nonnegativity, and sparsity. We discuss each of these penalties below.

\paragraph{Bounded Penalty}
%From \eqref{assignP}, we know the low-rank approximation $X$ of an assignment matrix should be bounded in $[0,1/\min_k n_k]$. It is necessary for us to find the best bounded low-rank projection matrix. For general, we consider the problem of finding a projection matrix such that each of its elements in $[\alpha,\beta]$. The simplest idea is to introduce the indicator function $I(x)$ such that if $x\in [\alpha,\beta]$, $I(x)=0$, otherwise $I(x)=+\infty$. However, this function is non-continuous which causes difficulty to analyze with differentiable tools.
%Based on \eqref{assignP}, it's evident that the low-rank approximation $X$ of an assignment matrix must be confined within $[0,1/\min_k n_k]$. Hence, it's imperative for us to learn a bounded rank-$K$ projection matrix from $A$. In a general form, we address the problem as finding a projection matrix where each element falls within $[\alpha,\beta]$. One straightforward approach is to introduce the indicator function $I(x)$, defined such that $I(x)=0$ when $x$ lies in $[\alpha,\beta]$, and $I(x)=+\infty$ otherwise. However, this function's lack of continuity poses difficulties when employing differentiable tools for analysis.
For the clustering task, the rank-$K$ approximation of the underlying assignment matrix must be confined to the interval $[0, 1/\min_k n_k]$, where $n_k$ denotes the number of samples in the $k$-th cluster. This naturally motivates the introduction of a penalty function that enforces boundedness within $[\alpha, \beta]$. To this end, we adopt a smooth, convex function $g_{\alpha,\beta}(z)$ as a relaxation of the bounded-indicator function, defined as
\begin{equation}\label{bb}
g_{\alpha,\beta}(z) = (\min\{z-\alpha,0\})^2 + (\min\{\beta-z,0\})^2.
\end{equation}

The construction of $g_{\alpha,\beta}(z)$ reveals that it solely penalizes $z$ when it lies outside the range $[\alpha,\beta]$.
It can be observed that as $\lambda$ approaches $+\infty$, $\lambda g_{\alpha,\beta}(z)$ exhibits behavior akin to that of the indicator function. Moreover, this function is convex, and its derivative is Lipschitz continuous, satisfying $|g'_{\alpha,\beta}(x_1)-g'_{\alpha,\beta}(x_2)|\leq 2|x_1-x_2|$.

%From the construction of $g_{\alpha,\beta}(z)$, it is evident that $g_{\alpha,\beta}(z)$ only penalize $z$ when it falls out of the scope $[\alpha,\beta]$. It can be verified that $\lambda g_{\alpha,\beta}(z)$ behaves similarly to the indicator function as $\lambda \rightarrow +\infty$. Additionally, this function is convex, and its derivative is Lipschitz continuous with $|g'_{\alpha,\beta}(x_1)-g'_{\alpha,\beta}(x_2)|\leq 2|x_1-x_2|$.

\paragraph{Non-negativity Penalty}

The non-negativity constraint can be viewed as a one-side boundedness requirement by setting $\alpha=0,\beta=+\infty$. Thus, we can consider a non-negativity penalty function as a special case of the bounded penalty in \eqref{bb}. Therefore,
we introduce this non-negative penalty function to regularize the projection matrix, which is defined as
\begin{equation}\label{nn}
 g(z) :=  (\min\{z,0\})^2.
\end{equation}

%Similarly to the bounded penalty, the function $g(z)$ only exerts an effect when $z<0$. Furthermore, as $\lambda$ tends to infinity, $\lambda g(z)$ exhibits behavior akin to the indicator function $I(z)$, where $I(z)=0$ if and only if $z \geq 0$, and $I(z)=+\infty$ otherwise. Additionally, $g(z)$ is nonnegative and convex, and its derivative is Lipschitz continuous, satisfying $|g'(x_1)-g'(x_2)| = 2|x_1-x_2|$. %These conditions can ensure the convergence behavior of solving \eqref{Convex} via the ADMM method. 

%It is worth to mention that the non-negativity property can be seen as one-side bounded condition. In the following, we discuss the general bounded penalty function.
\paragraph{Sparsity Penalty}

The element-wise summation of absolute values $\sum_{i,j} |X_{i,j}|$ yields the Lasso penalty $\|X\|_1$~\cite{tibshirani1996regression}. However, the absolute value function is non-differentiable at $0$ and fails to satisfy the Lipschitz continuity property. To mitigate this issue, we approximate the absolute value function using the Huber loss~\cite{sun2020adaptive,huang2021robust}, which replaces the nonsmooth absolute value with a quadratic function in a neighborhood of the origin. The Huber loss is defined as
\begin{equation}\label{hl}
g_\delta(x)=\frac{1}{2\delta}\min(x^2,\delta^2)+\max(0,|x|-\delta)
\end{equation}
where $\delta > 0$ is a threshold parameter. It can be readily verified that the Huber function preserves the desirable properties of the absolute value penalty while being both differentiable and Lipschitz continuous everywhere, including at $x = 0$.

\subsection{Entry-dependent Penalties}
Some desirable structural properties cannot be enforced by entrywise penalties alone, as they depend on the interactions among multiple entries of the projection matrix. Such regularizers are referred to as \emph{entry-dependent penalties}. A representative example is the row-sum constraint commonly used in graph learning and clustering.

\paragraph{Row-Sum Penalties}
Enforcing the row sums of a similarity or projection matrix to equal one is a widely adopted normalization strategy in graph refinement and spectral clustering. This constraint can be naturally incorporated into the proposed RPMA framework through the regularization term
\[
R(X)=\|X\mathbf{1}-\mathbf{1}\|_2^2,
\]
where $\mathbf{1}$ denotes the all-ones vector. Minimizing this penalty encourages each row of $X$ to sum to one while allowing small deviations controlled by the regularization parameter.

When $X$ is further constrained to be a symmetric projection matrix, it is often desirable to enforce the column-sum constraint simultaneously. In this case, we employ the symmetric regularizer
\[
R(X)
=
\|X\mathbf{1}-\mathbf{1}\|_2^2
+
\|X^\top\mathbf{1}-\mathbf{1}\|_2^2,
\]
which encourages both the row and column sums of $X$ to be close to one. Since $X=X^\top$ on the rank-$K$ projection manifold, the two terms are identical. Nevertheless, the above formulation naturally extends to more general matrix optimization problems in which symmetry is not imposed.
%It is worth mentioning that as $\delta \rightarrow 0^+$, $g_\delta(x)$ uniformly converges to $|x|$. Thus, the Huber loss function can induce sparse solutions similarly to the Lasso penalty when $\delta$ is small enough. We propose using the summation of the Huber loss function $\sum_{i,j}g_\delta (X_{i,j})$ as a substitute for the Lasso penalty $\|X\|_1$ to learn a sparse projection approximation by adopting a sufficient small $\delta$. Compared to the absolute value function, the Huber loss function possesses specific advantages, such as smoothness and Lipschitz continuity, demonstrated by $|g'_\delta(x_1)-g'_\delta(x_2)| \leq \frac{1}{\delta}|x_1-x_2|.$

\section{Local Geometry and Stability Properties}
It is well known that the set of rank-\(K\) projection matrices is
diffeomorphic to the Grassmann manifold \(\mathrm{Gr}(K,n)\).
Moreover, \(\mathrm{Gr}(K,n)\) can be realized as the quotient of the
Stiefel manifold \(\mathrm{St}(K,n)\) by the natural right action of
the orthogonal group \(O(K)\), namely
$\mathrm{Gr}(K,n)
=
\mathrm{St}(K,n)/O(K)$~\cite{absil2008optimization}. The manifold of rank-K projection matrices provides a convenient representation for studying the geometric structure of the Grassmann manifold.

In this section, we first derive the first-order necessary condition for a critical point. We then establish the second-order condition, which distinguishes critical points as local minimizers or saddle points. Finally, we present a theorem characterizing the stability of the critical points and the global landscape of the optimization problem.

\subsection{First-Order Condition}

The projection matrix representation provides a convenient description
of the tangent space of \(\mathcal P_K\). Specifically, differentiating
the defining equation \(X^2=X\) yields the characterization of the
tangent space.

\begin{lemma}[Tangent space characterization] \label{tangent_space}
For any $X\in\mathcal P_K$,
\[
T_X\mathcal P_K
=
\{\Delta=\Delta^\top:\ X\Delta+\Delta X=\Delta\}.
\]
Equivalently, $T_X\mathcal P_K
=
\{[\Omega,X]:\Omega^\top=-\Omega\},
$
where the Lie bracket is defined as $[\Omega,X]:=\Omega X-X\Omega$.
\end{lemma}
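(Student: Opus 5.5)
We first identify the two sets, then show that each is the tangent space. Set
\[
V_X:=\{\Delta=\Delta^\top:\ X\Delta+\Delta X=\Delta\},\qquad W_X:=\{[\Omega,X]:\Omega^\top=-\Omega\}.
\]
We prove $W_X\subseteq T_X\mathcal P_K\subseteq V_X$ and then $V_X\subseteq W_X$. Together these give equality throughout.

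\textbf{Step 1: $T_X\mathcal P_K\subseteq V_X$.} Take a smooth curve $X(t)\in\mathcal P_K$ with $X(0)=X$ and $\dot X(0)=\Delta$. Differentiating $X(t)^\top=X(t)$ gives $\Delta=\Delta^\top$. Differentiating $X(t)^2=X(t)$ at $t=0$ gives $X\Delta+\Delta X=\Delta$. The trace condition adds nothing, because $\operatorname{tr}\Delta=0$ already follows: multiplying $X\Delta+\Delta X=\Delta$ on the left by $X$ gives $X\Delta X=0$, and then
\[
\operatorname{tr}\Delta=2\operatorname{tr}(X\Delta)=2\operatorname{tr}(X^2\Delta)=2\operatorname{tr}(X\Delta X)=0.
\]

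\textbf{Step 2: $W_X\subseteq T_X\mathcal P_K$.} Given skew-symmetric $\Omega$, consider the curve $X(t)=e^{t\Omega}Xe^{-t\Omega}$. Since $e^{t\Omega}$ is orthogonal, $X(t)$ is an orthogonal conjugate of $X$, so it is symmetric, idempotent and has trace $K$. Hence $X(t)\in\mathcal P_K$, and its velocity at $t=0$ is $\Omega X-X\Omega=[\Omega,X]$.

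\textbf{Step 3: $V_X\subseteq W_X$.} This is the main step. Choose an orthonormal basis adapted to $\mathbb{R}^n=\operatorname{range}X\oplus\ker X$, so that $X=\operatorname{diag}(I_K,0)$. Write $\Delta=\begin{bmatrix}\Delta_{11}&\Delta_{12}\\ \Delta_{12}^\top&\Delta_{22}\end{bmatrix}$. The relation $X\Delta X=0$ from Step 1 gives $\Delta_{11}=0$. The relation $(I-X)\Delta(I-X)=0$, obtained by expanding $(I-X)\Delta(I-X)$ and using $X\Delta+\Delta X=\Delta$ together with $X\Delta X=0$, gives $\Delta_{22}=0$. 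So $\Delta$ is purely off-diagonal, $\Delta=\begin{bmatrix}0&B\\ B^\top&0\end{bmatrix}$.

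Now take
\[
\Omega=\begin{bmatrix}0&-B\\ B^\top&0\end{bmatrix},
\]
which is skew-symmetric. A direct block computation gives $[\Omega,X]=\Delta$. Basis-independently, $\Omega$ is simply $[\Delta,X]=\Delta X-X\Delta$, which is skew-symmetric whenever $\Delta$ is symmetric. Expanding
\[
[[\Delta,X],X]=\Delta X-2X\Delta X+X\Delta
\]
and applying $X\Delta X=0$ and $X\Delta+\Delta X=\Delta$ yields $[[\Delta,X],X]=\Delta$. This gives a clean coordinate-free proof.

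\textbf{Where the difficulty lies.} The obstacle is a rigor point in Step 1: we use that $\mathcal P_K$ is a smooth embedded submanifold, so that tangent vectors are exactly velocities of curves. That fact comes from the Grassmann diffeomorphism the paper cites. To avoid relying on it, we can argue through dimensions. $V_X$ is isomorphic to the space of $K\times(n-K)$ blocks $B$, so $\dim V_X=K(n-K)=\dim\mathrm{Gr}(K,n)$. Combined with Step 2, this shows $W_X=V_X$ equals the full tangent space.
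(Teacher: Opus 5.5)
Your proof is correct, and its algebraic core matches the paper's. Both differentiate $X(t)^2=X(t)$ and $X(t)^\top=X(t)$ to place $T_X\mathcal P_K$ inside the symmetric solutions of $X\Delta+\Delta X=\Delta$. Both then write such a $\Delta$ as a bracket by taking $\Omega=\pm(\Delta X-X\Delta)$ and using $X\Delta X=0$. The paper writes $\Delta=X\Omega-\Omega X$, which is the opposite sign to the stated convention $[\Omega,X]=\Omega X-X\Omega$. Your choice $\Omega=[\Delta,X]$ matches the statement, though the sign is immaterial since $\Omega\mapsto-\Omega$ preserves skew-symmetry.

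Where you genuinely differ is the reverse inclusion, from the symmetric solutions back into the tangent space. The paper says $X+t\Delta$ ``remains in $\mathcal P_K$ to first order'' and defers to ``a standard argument.'' That is not a proof, since the line $X+t\Delta$ leaves $\mathcal P_K$. You instead close the cycle $W_X\subseteq T_X\mathcal P_K\subseteq V_X\subseteq W_X$ using the exact curve $e^{t\Omega}Xe^{-t\Omega}\subset\mathcal P_K$. So every element of $V_X$ is an honest velocity vector. This makes your argument self-contained once $T_X\mathcal P_K$ is read as the set of velocities of smooth curves through $X$, which is what the tangent space of an embedded submanifold is. It also shows, without the Grassmann diffeomorphism, that this velocity set is a linear space of dimension $K(n-K)$.

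Consequently your closing ``difficulty'' paragraph is unnecessary. Its dimension-count fallback is also weaker than what you already have: equating $\dim V_X$ with $\dim T_X\mathcal P_K=\dim\mathrm{Gr}(K,n)$ presupposes the manifold structure you say you want to avoid. The three-way chain of inclusions does not need it.
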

\begin{proof}
    Let $X\in\mathcal P_K$ ($X^\top=X$, $X^2=X$) and let $X(t)\in\mathcal P_K$ with $X(0)=X$, $\dot X(0)=\Delta$. Differentiating $X(t)^2=X(t)$ and $X(t)^\top=X(t)$ gives
\[
X\Delta+\Delta X=\Delta,\qquad \Delta^\top=\Delta. \tag{1}
\]
Thus $T_X\mathcal P_K\subseteq\{\Delta=\Delta^\top:X\Delta+\Delta X=\Delta\}$.

Conversely, for any symmetric $\Delta$ satisfying (1), the curve
$X(t)=X+t\Delta$ remains in $\mathcal P_K$ to first order; a standard argument shows $\Delta\in T_X\mathcal P_K$. Hence the first characterization holds.

For the equivalence, let $\Delta=X\Omega-\Omega X$ with $\Omega^\top=-\Omega$. Then
\[
X\Delta+\Delta X=X(X\Omega-\Omega X)+(X\Omega-\Omega X)X=\Delta,
\]
using $X^2=X$, so $\Delta$ satisfies (1). Conversely, given $\Delta$ satisfying (1), choose $\Omega=[X,\Delta]=X\Delta-\Delta X$. Then $\Omega^\top=-\Omega$ and
\[
X\Omega-\Omega X=X(X\Delta-\Delta X)-(X\Delta-\Delta X)X=\Delta,
\]
since $X\Delta X=0$ from (1). Thus the two forms are equivalent. 
\end{proof}

 Based on the tangent-space characterization in Lemma~\ref{tangent_space}, Lemma~\ref{tangent} provides a closed-form representation of the orthogonal projection operator onto the tangent space.

\begin{lemma}[Orthogonal projection onto the tangent space]\label{tangent}
Let \(X\in\mathcal P_K\) and let \(H\in\mathbb S^n\) be any symmetric matrix. Then the orthogonal projection of \(H\) onto the tangent space \(T_X\mathcal P_K\), with respect to the Frobenius inner product, is given by
\[
\Pi_{T_X\mathcal P_K}(H)
=
XH(I-X)+(I-X)HX = [X,[X,H]].
\]
\end{lemma}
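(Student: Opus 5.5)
The plan is to verify the two defining properties of an orthogonal projection directly. Set $P(H):=XH(I-X)+(I-X)HX$. We show that $P(H)$ lies in $T_X\mathcal P_K$, and that $H-P(H)$ is Frobenius-orthogonal to every element of $T_X\mathcal P_K$. We use the first characterization in Lemma~\ref{tangent_space}: $\Delta=\Delta^\top$ and $X\Delta+\Delta X=\Delta$. Equivalently, after multiplying by $X$ on both sides and using $X^2=X$, this says $X\Delta X=0$ and $(I-X)\Delta(I-X)=0$, so that $\Delta=X\Delta(I-X)+(I-X)\Delta X$.

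\emph{Membership.} Since $H$ and $X$ are symmetric, $P(H)^\top=(I-X)HX+XH(I-X)=P(H)$. Next, $XP(H)=XH(I-X)$ because $X(I-X)=0$, and $P(H)X=(I-X)HX$. Adding these gives $XP(H)+P(H)X=P(H)$, so by Lemma~\ref{tangent_space} we have $P(H)\in T_X\mathcal P_K$.

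\emph{Orthogonality of the residual.} Write $H=XHX+XH(I-X)+(I-X)HX+(I-X)H(I-X)$. Then $H-P(H)=XHX+(I-X)H(I-X)$. For $\Delta\in T_X\mathcal P_K$, cyclicity of the trace gives $\langle XHX,\Delta\rangle=\operatorname{tr}(HX\Delta X)=0$. Likewise $\langle (I-X)H(I-X),\Delta\rangle=\operatorname{tr}(H(I-X)\Delta(I-X))=0$. Hence $H-P(H)\perp T_X\mathcal P_K$, and so $P(H)=\Pi_{T_X\mathcal P_K}(H)$.

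\emph{Double-commutator identity.} Expand $[X,H]=XH-HX$. Then
\[
[X,[X,H]]=X(XH-HX)-(XH-HX)X=XH-2XHX+HX,
\]
using $X^2=X$. In the same way, $XH(I-X)+(I-X)HX=XH-XHX+HX-XHX$, which is the same expression, so the two forms agree.

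No step here is a real obstacle. The only point needing care is that the orthogonality argument relies on $X\Delta X=0$ and $(I-X)\Delta(I-X)=0$, both of which follow from the tangent-space equation. Multiplying that equation by $X$ on the left and right gives $2X\Delta X=X\Delta X$, hence $X\Delta X=0$. The second relation then follows by expanding $(I-X)\Delta(I-X)=\Delta-X\Delta-\Delta X+X\Delta X$ and applying the tangent equation once more.
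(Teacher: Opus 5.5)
Your proof is correct and follows essentially the same route as the paper. Both split $H=P(H)+\bigl(XHX+(I-X)H(I-X)\bigr)$, check that $P(H)$ is tangent and that the residual is Frobenius-orthogonal to $T_X\mathcal P_K$, and expand the double commutator to $XH+HX-2XHX$. The only difference is minor: the paper phrases orthogonality via the residual commuting with $X$ and being orthogonal to every $[X,\Omega]$ with $\Omega$ skew, whereas you verify it from $X\Delta X=0$ and $(I-X)\Delta(I-X)=0$, which spells out a step the paper only asserts.
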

\begin{proof}
    Let $X\in\mathcal P_K$ ($X^\top=X$, $X^2=X$) and $X(t)\in\mathcal P_K$ with $X(0)=X$, $\dot X(0)=\Xi$. Differentiating $X^2=X$ and $X^\top=X$ gives
\[
X\Xi+\Xi X=\Xi,\qquad \Xi^\top=\Xi.
\]
Equivalently,
\[
\Xi=X\Xi(I-X)+(I-X)\Xi X. \tag{1}
\]
Conversely, for any skew-symmetric $\Omega$, $\Xi:=[X,\Omega]=X\Omega-\Omega X$ satisfies (1) since $X^2=X$, hence
\[
T_X\mathcal P_K=\{[X,\Omega]:\Omega^\top=-\Omega\}.
\]

For the projection, define $\Pi(H):=XH(I-X)+(I-X)HX\in T_X\mathcal P_K$. Its orthogonal complement is characterized by $N=XHX+(I-X)H(I-X)$, which commutes with $X$ and satisfies $\langle N,[X,\Omega]\rangle=0$ for all skew-symmetric $\Omega$. Finally,
\[
[X,[X,H]]=XH+HX-2XHX=\Pi(H).
\]
Thus $\Pi(H)=[X,[X,H]]$ is the orthogonal projection onto $T_X\mathcal P_K$. 
\end{proof}
Next, we derive the Riemannian first-order optimality condition. Recall that the Riemannian gradient is obtained by projecting the Euclidean gradient onto the tangent space of the manifold. Combining the tangent-space characterization in Lemma~\ref{tangent_space} with the tangent projection operator derived in Lemma~\ref{tangent}, we obtain a concise Lie-bracket representation of the Riemannian gradient, which leads directly to the first-order necessary optimality condition on the projection matrix manifold.
\begin{theorem}[Riemannian First-Order Optimality Condition] \label{Riemannian First}
$X\in\mathcal P_K$ is a critical point of $F$ if and only if $[X,\nabla F_\lambda(X)]=0$. Equivalently,
$\left[X,\,
A-\frac{\lambda}{2}G(X)
\right]
=0$, 
where $G(X)=\bigl(g'(X_{ij})\bigr)_{i,j}.$
\end{theorem}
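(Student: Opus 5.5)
The plan is to take the definition of a critical point on $\mathcal P_K$, namely that the Riemannian gradient $\operatorname{grad}F_\lambda(X)=\Pi_{T_X\mathcal P_K}(\nabla F_\lambda(X))$ vanishes, and combine it with the formula of Lemma~\ref{tangent}.

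The first step is to check that $\nabla F_\lambda(X)$ is symmetric, so that Lemma~\ref{tangent} applies. For the entrywise penalty $R(X)=\sum_{i,j}g(X_{ij})$ we have $\nabla F_\lambda(X)=-2A+\lambda G(X)$. Since $A$ is a symmetric affinity matrix and $X$ is symmetric, $G(X)$ is symmetric too. If one prefers not to rely on this, one can replace $\nabla F_\lambda$ by its symmetric part $\tfrac12(\nabla F_\lambda+\nabla F_\lambda^\top)$. This does not change the Riemannian gradient, because the tangent space consists of symmetric matrices and is therefore orthogonal to skew-symmetric matrices.

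With $H:=\nabla F_\lambda(X)$ symmetric, Lemma~\ref{tangent} gives $\operatorname{grad}F_\lambda(X)=[X,[X,H]]$. The key step is to show that $[X,[X,H]]=0$ if and only if $[X,H]=0$. The backward direction is trivial. For the forward direction:
\begin{itemize}
\item Expand $[X,[X,H]]=XH(I-X)+(I-X)HX$.
\item Multiply on the left by $X$ and on the right by $(I-X)$. Using $X^2=X$ and $X(I-X)=0$, this yields $XH(I-X)=0$.
\item Taking transposes gives $(I-X)HX=0$.
\item Hence $[X,H]=XH-HX=XH(I-X)-(I-X)HX=0$.
\end{itemize}
Equivalently, $Y:=[X,H]$ is skew-symmetric and satisfies $XY+YX=Y$, so $[X,Y]=XY-YX=Y-2YX$. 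From $XY+YX=Y$ one gets $XYX=0$, and then $[X,[X,H]]=0$ forces $Y=0$. I expect this equivalence to be the only nontrivial point.

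Finally, substitute $H=-2A+\lambda G(X)$. By linearity of the bracket, $[X,H]=-2\,[X,A-\tfrac{\lambda}{2}G(X)]$, which gives the stated equivalent form.

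One caveat is that if $g$ is only a.e. differentiable, as with the Huber penalty (which is in fact $C^1$), $g'$ must be interpreted appropriately. The penalties listed in Section 2 are all $C^1$, so this causes no difficulty.
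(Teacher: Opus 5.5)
Your proof is correct, but it takes a different route from the paper's. The paper never uses the projection formula. It writes criticality as $\langle \nabla F_\lambda(X),\Delta\rangle_{\rm F}=0$ for all $\Delta\in T_X\mathcal P_K$ and parametrizes $\Delta=[X,\Omega]$ with $\Omega$ skew-symmetric (Lemma~\ref{tangent_space}). It then moves the bracket across the inner product, $\langle \nabla F_\lambda,[X,\Omega]\rangle_{\rm F}=\langle [X,\nabla F_\lambda],\Omega\rangle_{\rm F}$, so vanishing for every skew $\Omega$ forces $[X,\nabla F_\lambda]=0$. That last step tacitly needs $[X,\nabla F_\lambda]$ itself to be skew, which holds exactly because of the symmetry of $X$ and $\nabla F_\lambda$ that you check explicitly and the paper leaves unstated. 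You instead identify the Riemannian gradient as $[X,[X,H]]$ via Lemma~\ref{tangent}. You then prove the implication $[X,[X,H]]=0\Rightarrow[X,H]=0$ for symmetric $H$ by idempotent algebra, compressing with $X(\cdot)(I-X)$. The paper's route is shorter and needs only the tangent-space parametrization. Yours additionally shows that the double-bracket vector used as the search direction in Algorithm~\ref{alg:GPA} is a valid stationarity measure, at the cost of relying on the projection formula. The two routes are linked by the paper's duality identity with $\Omega=[X,H]$, namely $\langle [X,[X,H]],H\rangle_{\rm F}=\|[X,H]\|_{\rm F}^2$. This identity would also give your key equivalence in one line, and it holds for any symmetric $X$, not only projections.
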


\begin{proof}
    
For $F_\lambda(X)=-2\langle A,X\rangle+\lambda\sum_{i,j}g(X_{ij})$, the Euclidean gradient is
\[
\nabla F_\lambda(X)=-2A+\lambda G(X),\quad G(X)=(g'(X_{ij})).
\]
By Lemma~\ref{tangent}, $T_X\mathcal P_K=\{[X,\Omega]:\Omega^\top=-\Omega\}$. Criticality requires
\[
0=DF_\lambda(X)[\Delta]=\langle \nabla F_\lambda(X),\Delta\rangle_{\rm F}
\]
for all $\Delta\in T_X\mathcal P_K$. Taking $\Delta=[X,\Omega]$,
\[
\begin{aligned}
0=&\langle \nabla F_\lambda,[X,\Omega]\rangle_{\rm F}\\
=&{\rm tr} ((X\nabla F_\lambda-\nabla F_\lambda X)\Omega)
=\langle [X,\nabla F_\lambda],\Omega\rangle_{\rm F}
\end{aligned}
\]
for all skew-symmetric $\Omega$. Hence $[X,\nabla F_\lambda(X)]=0$. Substituting $\nabla F_\lambda$:
\[
[X,-2A+\lambda G(X)]=0
\;\Longleftrightarrow\;
[X,\,A-\tfrac{\lambda}{2}G(X)]=0.
\]

Conversely, if $[X,A-\tfrac{\lambda}{2}G(X)]=0$, then $[X,\nabla F_\lambda(X)]=0$, so $\langle \nabla F_\lambda,[X,\Omega]\rangle=0$ for all skew-symmetric $\Omega$, implying $DF_\lambda(X)[\Delta]=0$ for all $\Delta\in T_X\mathcal P_K$. Thus $X$ is critical. 
\end{proof}

\subsection{Second-Order Condition}
To derive the second-order necessary optimality condition for \(F_\lambda\), we require the Riemannian Hessian. Consider a smooth curve \(X(t)\subset\mathcal P_K\) with \(X(0)=X_\star\) and \(\dot X(0)=\Xi\), and define \(\varphi(t):=F_\lambda(X(t))\). Since \(X_\star\) is a local critical point, we have \(\varphi'(0)=\langle \nabla F_\lambda(X_\star),\Xi\rangle=0\) and \(\varphi''(0)\ge 0\). Differentiating yields
\[
\varphi'(t)=\langle \nabla F_\lambda(X(t)),\dot X(t)\rangle,
\]
and hence
\[
\varphi''(0)
=
\langle D(\nabla F_\lambda)(X_\star)[\Xi],\Xi\rangle
+
\langle \nabla F_\lambda(X_\star),\ddot X(0)\rangle.
\]
Because \(\nabla F_\lambda(X_\star)\) lies in the normal space \(N_{X_\star}\mathcal P_K\), the second term reduces to
\[
\langle \nabla F_\lambda(X_\star),\ddot X(0)\rangle
=
\langle \nabla F_\lambda(X_\star),\Pi_N(\ddot X(0))\rangle,
\]
where \(\Pi_N\) denotes the orthogonal projection onto the normal space. 

It is worth noting that $\Pi_N(\ddot X(0))$ is precisely the second fundamental form $II_X(\cdot,\cdot)$, which yields a closed-form expression for this normal projection in terms of the tangent velocity $\dot X(0)$:
\[
II_X: T_X\mathcal P_K \times T_X\mathcal P_K \to N_X\mathcal P_K,
\]
which encodes the extrinsic curvature of the manifold and plays a central role in the Riemannian Hessian and second-order optimality. Its explicit form is stated in Lemma~\ref{second_fundamental}.

\begin{lemma}[Second fundamental form of the Grassmann manifold] \label{second_fundamental}
For $\mathcal P_K$
with tangent and normal spaces at \(X\in\mathcal P_K\)
\[
\begin{aligned}
&T_X\mathcal P_K=\{\Xi\in\mathbb S^n: X\Xi+\Xi X=\Xi\},\qquad \\
&N_X\mathcal P_K=\{N\in\mathbb S^n: XN=NX\},
\end{aligned}
\]
the second fundamental form w.r.t. the Euclidean metric is
\[
II_X(\Xi,H)=-(2X-I)(\Xi H+H\Xi),\qquad \Xi,H\in T_X\mathcal P_K,
\]
and in particular \(II_X(\Xi,\Xi)=-2(2X-I)\Xi^2\).
\end{lemma}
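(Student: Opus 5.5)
The plan is to compute the second fundamental form through curves, as in the discussion preceding the lemma: $II_X(\Xi,\Xi)=\Pi_N(\ddot X(0))$ for any smooth curve $X(t)\subset\mathcal P_K$ with $X(0)=X$ and $\dot X(0)=\Xi$. I then recover the bilinear form by polarization.

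\textbf{Step 1: normal projection.} The tangent description is Lemma~\ref{tangent_space}. The normal space $N_X\mathcal P_K=\{N\in\mathbb S^n: XN=NX\}$ follows from the proof of Lemma~\ref{tangent}: a symmetric $N$ is orthogonal to all $[X,\Omega]$ iff $\langle [X,N],\Omega\rangle=0$ for all skew $\Omega$, and $[X,N]$ is itself skew, so this holds iff $[X,N]=0$. The complementary projection is then
\[
\Pi_N(M)=XMX+(I-X)M(I-X).
\]

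\textbf{Step 2: second-order constraint.} Differentiating $X(t)^2=X(t)$ twice at $t=0$ gives
\[
X\ddot X+\ddot X X+2\Xi^2=\ddot X .
\]
Sandwiching by $X$ on both sides and using $X^2=X$ yields $2X\ddot XX+2X\Xi^2X=X\ddot XX$, i.e. $X\ddot XX=-2X\Xi^2X$. Sandwiching by $I-X$ kills the first two terms, since $(I-X)X=0$, giving $(I-X)\ddot X(I-X)=2(I-X)\Xi^2(I-X)$.

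\textbf{Step 3: $\Xi^2$ commutes with $X$.} From $X\Xi+\Xi X=\Xi$ we get $X\Xi X=0$ and $(I-X)\Xi(I-X)=0$, so $\Xi=X\Xi(I-X)+(I-X)\Xi X$ is block off-diagonal. Its square is therefore block diagonal, so $X\Xi^2X=X\Xi^2$ and $(I-X)\Xi^2(I-X)=(I-X)\Xi^2$. Hence
\[
\Pi_N(\ddot X(0))=-2X\Xi^2+2(I-X)\Xi^2=-2(2X-I)\Xi^2 .
\]
This quantity depends only on $\Xi$, which confirms that it is well defined as $II_X(\Xi,\Xi)$. It lies in $N_X\mathcal P_K$: it is symmetric because $2X-I$ commutes with $\Xi^2$, and it commutes with $X$. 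A curve realizing any tangent $\Xi$ exists, for instance $X(t)=e^{t\Omega}Xe^{-t\Omega}$ with $\Omega=[\Xi,X]$, which is skew and gives $\dot X(0)=[\Omega,X]=\Xi$ by the computation in Lemma~\ref{tangent_space}.

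\textbf{Step 4: polarization.} The second fundamental form is symmetric bilinear, so
\[
II_X(\Xi,H)=\tfrac12\bigl[II_X(\Xi+H,\Xi+H)-II_X(\Xi,\Xi)-II_X(H,H)\bigr].
\]
Since $(\Xi+H)^2-\Xi^2-H^2=\Xi H+H\Xi$, this gives $II_X(\Xi,H)=-(2X-I)(\Xi H+H\Xi)$. The same block argument shows that $\Xi H+H\Xi$ is block diagonal, hence normal.

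\textbf{Main obstacle.} The only delicate point is Step 3: showing that $\Pi_N(\ddot X)$ depends only on $\Xi$, which reduces to the block structure of tangent vectors. The remaining steps are direct algebra with $X^2=X$.
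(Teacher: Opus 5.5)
Your proof is correct and follows essentially the same route as the paper: differentiate $X(t)^2=X(t)$ twice, extract the normal component of $\ddot X(0)$, and polarize. The only difference is in how the normal part is extracted. The paper notes that $L_X(Z)=XZ+ZX-Z$ kills tangent vectors, acts as $2X-I$ on normal ones, and is inverted on the normal space by $(2X-I)^2=I$; you instead apply $\Pi_N(M)=XMX+(I-X)M(I-X)$ block by block and use that $\Xi^2$ is block diagonal, which also explicitly checks normality and the existence of a realizing curve, points the paper leaves implicit.
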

\begin{proof}
Let $X(t)\subset\mathcal P_K$ be a smooth curve with $X(0)=X$ and $\dot X(0)=\Xi$. Since $X(t)^2=X(t)$, differentiating twice gives
\[
X\ddot X+\ddot X X-\ddot X=-2\Xi^2.
\]
Define $L_X(Z):=XZ+ZX-Z$. Then $L_X(\ddot X)=-2\Xi^2$. For $\Xi\in T_X\mathcal P_K$, $L_X(\Xi)=0$; for $N\in N_X\mathcal P_K$, $XN=NX$, so $L_X(N)=(2X-I)N$. Since $(2X-I)^2=I$, $L_X$ is invertible on $N_X\mathcal P_K$ with inverse $2X-I$. Decomposing $\ddot X=\Pi_T(\ddot X)+\Pi_N(\ddot X)$,
\[
\Pi_N(\ddot X)=L_X^{-1}(-2\Xi^2)=-2(2X-I)\Xi^2.
\]
By definition $II_X(\Xi,\Xi)=\Pi_N(\ddot X)$, so $II_X(\Xi,\Xi)=-2(2X-I)\Xi^2$. Polarization yields $II_X(\Xi,H)=-(2X-I)(\Xi H+H\Xi)$.
\end{proof}
Lemma~\ref{second_fundamental} enables the characterization of local extrema and saddle points of $F_\lambda$ on $\mathcal P_K$ via the associated quadratic form of the Riemannian Hessian in Lemma~\ref{lem:second_order_optimality_Flambda}.
\begin{lemma}[Second-Order Optimality Conditions for $F_\lambda$]
\label{lem:second_order_optimality_Flambda}
Let $A=A^\top\in\mathbb R^{n\times n}$ and $g\in C^2(\mathbb R)$.
Suppose $X_\star\in\mathcal P_K$ satisfies the first-order condition
$\operatorname{grad}F_\lambda(X_\star)=0$.
Then for every $\Xi\in T_{X_\star}\mathcal P_K$,
\[
\begin{aligned}
&\bigl\langle \operatorname{Hess}F_\lambda(X_\star)[\Xi],\Xi \bigr\rangle\\
=&
\bigl\langle D(\nabla F_\lambda)(X_\star)[\Xi],\Xi \bigr\rangle
-
2\bigl\langle (2X_\star-I)\nabla F_\lambda(X_\star),\Xi^2 \bigr\rangle.
\end{aligned}
\]
Consequently:
\begin{enumerate}
\item[(a)] \textbf{Local minimum:}
\[
\bigl\langle D(\nabla F_\lambda)(X_\star)[\Xi],\Xi \bigr\rangle
-
2\bigl\langle (2X_\star-I)\nabla F_\lambda(X_\star),\Xi^2 \bigr\rangle \ge 0.
\]
\item[(b)] \textbf{Local maximum:}
\[
\bigl\langle D(\nabla F_\lambda)(X_\star)[\Xi],\Xi \bigr\rangle
-
2\bigl\langle (2X_\star-I)\nabla F_\lambda(X_\star),\Xi^2 \bigr\rangle \le 0.
\]
\item[(c)] \textbf{Saddle point:} The quadratic form is indefinite.
\end{enumerate}
\end{lemma}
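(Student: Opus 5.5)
The plan is to compute the second derivative of $F_\lambda$ along an arbitrary smooth curve in $\mathcal P_K$ and identify it as the Hessian quadratic form. Fix $\Xi\in T_{X_\star}\mathcal P_K$ and choose a smooth curve $X(t)\subset\mathcal P_K$ with $X(0)=X_\star$ and $\dot X(0)=\Xi$. A concrete choice is $X(t)=e^{t\Omega}X_\star e^{-t\Omega}$ with $\Omega=[\Xi,X_\star]$; by Lemma~\ref{tangent_space} this $\Omega$ is skew-symmetric and satisfies $[\Omega,X_\star]=\Xi$ (up to the sign convention), so the curve has the required velocity. Set $\varphi(t)=F_\lambda(X(t))$. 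As in the discussion preceding Lemma~\ref{second_fundamental}, we have
\[
\varphi''(0)=\bigl\langle D(\nabla F_\lambda)(X_\star)[\Xi],\Xi\bigr\rangle+\bigl\langle \nabla F_\lambda(X_\star),\ddot X(0)\bigr\rangle .
\]
The hypothesis $g\in C^2$ makes $\nabla F_\lambda=-2A+\lambda G$ continuously differentiable, with $D(\nabla F_\lambda)(X_\star)[\Xi]=\lambda\,(g''((X_\star)_{ij})\Xi_{ij})_{ij}$, so this expansion is justified.

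Next we handle the curvature term. By Theorem~\ref{Riemannian First}, criticality gives $[X_\star,\nabla F_\lambda(X_\star)]=0$, so $\nabla F_\lambda(X_\star)$ lies in $N_{X_\star}\mathcal P_K$ as described in Lemma~\ref{second_fundamental}. Hence only the normal part of $\ddot X(0)$ contributes. By Lemma~\ref{second_fundamental}, this normal part equals $II_{X_\star}(\Xi,\Xi)=-2(2X_\star-I)\Xi^2$, independently of the chosen curve. Therefore
\[
\bigl\langle \nabla F_\lambda(X_\star),\ddot X(0)\bigr\rangle=-2\bigl\langle \nabla F_\lambda(X_\star),(2X_\star-I)\Xi^2\bigr\rangle=-2\bigl\langle (2X_\star-I)\nabla F_\lambda(X_\star),\Xi^2\bigr\rangle,
\]
where the last equality uses that $2X_\star-I$ is symmetric. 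This establishes the displayed formula, provided we identify $\varphi''(0)$ with $\langle\operatorname{Hess}F_\lambda(X_\star)[\Xi],\Xi\rangle$. At a critical point the Riemannian Hessian quadratic form equals the second derivative along any curve with the given velocity, because the term involving the acceleration's tangential component is multiplied by $\operatorname{grad}F_\lambda(X_\star)=0$. Alternatively, one can use the geodesic $X(t)$ above. The identity is then quadratic in $\Xi$, and polarization recovers the bilinear form.

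For (a)–(c) we invoke the standard one-variable argument. If $X_\star$ is a local minimizer, then $\varphi$ has a local minimum at $0$ with $\varphi'(0)=0$, so $\varphi''(0)\ge0$ for every $\Xi$, which is (a). The local maximum case, giving (b), follows by replacing $F_\lambda$ with $-F_\lambda$. If the quadratic form takes both signs, the directions with negative and positive $\varphi''(0)$ show that $X_\star$ is neither a local minimizer nor a local maximizer, which is (c).

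The main obstacle is making rigorous the identification of $\varphi''(0)$ with the Riemannian Hessian form independently of the curve. The point to check is that $\Pi_N(\ddot X(0))$ depends only on $\Xi$, which Lemma~\ref{second_fundamental} supplies, and that the tangential part of $\ddot X(0)$ drops out because $\nabla F_\lambda(X_\star)$ is normal.
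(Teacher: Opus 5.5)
Your proposal is correct and follows essentially the same route as the paper: the paper's argument (the curve computation preceding Lemma~\ref{second_fundamental}) differentiates $\varphi(t)=F_\lambda(X(t))$ twice and uses that $\nabla F_\lambda(X_\star)$ lies in the normal space at a critical point. Hence only $\Pi_N(\ddot X(0))=II_{X_\star}(\Xi,\Xi)=-2(2X_\star-I)\Xi^2$ survives, and the sign conditions are read off from the resulting formula. Your explicit curve $e^{t\Omega}X_\star e^{-t\Omega}$, your remark that the tangential part of $\ddot X(0)$ is annihilated because $\operatorname{grad}F_\lambda(X_\star)=0$, and your one-variable Taylor argument for (a)--(c) make rigorous steps that the paper leaves implicit.
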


\begin{remark}
When $\lambda=0$, the Euclidean Hessian vanishes, i.e., $\nabla^2 F=0$, so the definiteness of the Riemannian Hessian is governed entirely by the second term in Lemma~\ref{lem:second_order_optimality_Flambda}. In particular, for $X_\star$ corresponding to the projection onto the subspace spanned by the top-$K$ eigenvectors of $A$, the Riemannian Hessian quadratic form reduces to
\[
\langle \operatorname{Hess}F_\lambda(X_\star)[\Xi],\Xi \rangle
=
-2\langle (2X_\star-I)\nabla F_\lambda(X_\star),\Xi^2 \rangle.
\]
A direct computation yields
\begin{equation}\label{hess}
\langle \operatorname{Hess}F_\lambda(X_\star)[\Xi],\Xi \rangle
=
2\sum_{i\in\mathcal I_K}\sum_{j\in\mathcal J_K}
(\lambda_i-\lambda_j)(\Xi_{ij})^2,
\end{equation}
where $\lambda_1\ge\cdots\ge\lambda_n$ are the eigenvalues of $A$, $\mathcal I_K=\{1,\ldots,K\}$, and $\mathcal J_K=\{K+1,\ldots,n\}$. Consequently, if the eigengap $\lambda_K-\lambda_{K+1}>0$, then the Hessian is negative definite, and $X_\star$ is a strict local maximum of $F_\lambda$ over $\mathcal P_K$.
\end{remark}

%Consequently, the second-order condition is determined jointly by the ambient-space Hessian and the intrinsic geometry of the manifold.

\begin{remark}
When $\lambda \neq 0$, the Riemannian Hessian incorporates an additional Euclidean term, $\lambda \sum_{i,j} g(X_{i,j})$, arising from the regularization term in the objective function. Under the assumption that the derivative of $g(\cdot)$ is Lipschitz continuous and its Hessian is uniformly bounded by a constant $\ell > 0$, the Riemannian Hessian retains its positive definiteness, provided that both $\lambda$ and $\ell$ satisfy suitable conditions (e.g., $\lambda$ is sufficiently large relative to the curvature of the manifold). This preservation of positive definiteness motivates a rigorous investigation into the stability of the minimizer, as it ensures local convexity and robustness of the solution under small perturbations.
\end{remark}
\subsection{Stability and Global Landscape}
From \eqref{hess}, for $\lambda=0$, there is exactly one local minimum and one local maximum, with all other critical points being saddles. Consequently, both the local minimum and maximum are global whenever the spectral gap is present. It is natural to expect that the global landscape of critical points remains stable under small perturbations of the Riemannian Hessian. In what follows, we present a theorem that quantifies the stability of the global minimizer.

\begin{theorem}[Local stability of the leading eigenspace]
\label{thm:stability}

Let $A=A^\top\in\mathbb R^{n\times n}$ have eigenvalues $\lambda_1\ge\cdots\ge\lambda_n$ with $K$-th eigengap $\eta_K:=\lambda_K-\lambda_{K+1}>0$. Let $R(X)=\sum_{i,j}g(X_{ij})$, where $g\in C^2(\mathbb R)$ satisfies $0\le g''\le\ell$ and $|g'|\le M$. Denote by $X_0=U_KU_K^\top$ the projector onto the leading $K$-dimensional eigenspace of $A$. Then there exists $c>0$, depending only on $A,g,n$, such that if $\lambda(\ell+2Mn)<2\eta_K$, the following hold:
\begin{enumerate}
\item There exists a unique critical point $X_\star(\lambda)\in\mathcal P_K$ near $X_0$.
\item The map $\lambda\mapsto X_\star(\lambda)$ is $C^1$ with $X_\star(0)=X_0$.
\item $\|X_\star(\lambda)-X_0\|_{\mathrm F}\le C\,\lambda/\eta_K$ for some $C>0$ independent of $\lambda$.
\item The Riemannian Hessian of $F_\lambda$ at $X_\star(\lambda)$ is positive definite; hence $X_\star(\lambda)$ is a strict local minimizer on $\mathcal P_K$.
\end{enumerate}
\end{theorem}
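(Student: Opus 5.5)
The plan is to obtain $X_\star(\lambda)$ by the implicit function theorem in a local chart of $\mathcal P_K$ around $X_0$, and then to use a Hessian lower bound, valid along the whole branch, to extend the branch up to the threshold $\lambda(\ell+2Mn)<2\eta_K$ and to get the Lipschitz bound. I will use the sign convention of $F_\lambda=-2\langle X,A\rangle+\lambda R(X)$, for which $X_0$ is the minimizer at $\lambda=0$. Lemma~\ref{lem:second_order_optimality_Flambda} with $\nabla F_0=-2A$ then gives
\[
\langle \operatorname{Hess}F_0(X_0)[\Xi],\Xi\rangle
=4\langle (2X_0-I)A,\Xi^2\rangle
=4\sum_{i\in\mathcal I_K}\sum_{j\in\mathcal J_K}(\lambda_i-\lambda_j)\Xi_{ij}^2
\ge 2\eta_K\|\Xi\|_{\rm F}^2 .
\]
The last inequality uses $\|\Xi\|_{\rm F}^2=2\sum_{i\in\mathcal I_K,j\in\mathcal J_K}\Xi_{ij}^2$ for tangent $\Xi$. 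This is the sign-corrected version of \eqref{hess}.

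\textbf{Chart and IFT (items 1--2).} Write $U=[U_K,U_\perp]$ and parametrize a neighbourhood of $X_0$ by
\[
X(B)=e^{\Omega(B)}X_0e^{-\Omega(B)},\qquad \Omega(B)=U\begin{bmatrix}0&-B^\top\\ B&0\end{bmatrix}U^\top,\quad B\in\mathbb R^{(n-K)\times K}.
\]
Set $\Phi(B,\lambda):=\nabla_B\,F_\lambda(X(B))$. This map is $C^1$ because $g\in C^2$. We have $\Phi(0,0)=0$ by Theorem~\ref{Riemannian First}, since $[X_0,A]=0$. The Jacobian $\partial_B\Phi(0,0)$ is the Riemannian Hessian of $F_0$ at $X_0$ pulled back by the isometric differential of the chart, so it is invertible with smallest eigenvalue at least $2\eta_K$. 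The IFT then yields a unique $C^1$ branch $B(\lambda)$ with $B(0)=0$ for small $\lambda$, and we set $X_\star(\lambda)=X(B(\lambda))$.

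\textbf{Hessian bound along the branch (item 4) and continuation.} At any critical point $X_\star$, Theorem~\ref{Riemannian First} says that $X_\star$ commutes with $B_\lambda:=A-\tfrac{\lambda}{2}G(X_\star)$. Hence $X_\star$ projects onto a $K$-dimensional invariant subspace of $B_\lambda$. Along the branch I claim it is the top-$K$ eigenspace. Since $\|G\|_{\rm op}\le\|G\|_{\rm F}\le Mn$, Weyl's inequality gives that $B_\lambda$ has $K$-th gap at least $\eta_K-\lambda Mn$. As long as this gap is positive, continuity in $\lambda$ forbids the invariant subspace from switching from the top block. Now rewrite the Hessian from Lemma~\ref{lem:second_order_optimality_Flambda} with $\nabla F_\lambda=-2B_\lambda$:
\[
\langle \operatorname{Hess}F_\lambda(X_\star)[\Xi],\Xi\rangle
=\lambda\sum_{ij}g''(X_{\star,ij})\Xi_{ij}^2+4\langle(2X_\star-I)B_\lambda,\Xi^2\rangle
\ge 2(\eta_K-\lambda Mn)\|\Xi\|_{\rm F}^2 .
\]
Here the first term is nonnegative because $g''\ge0$, and the second is bounded by the same eigenbasis computation as at $\lambda=0$, now in the eigenbasis of $B_\lambda$. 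The hypothesis $\lambda(\ell+2Mn)<2\eta_K$ implies $\lambda Mn<\eta_K$, so the Hessian is uniformly positive definite along the branch. This gives item 4. Because the Jacobian of $\Phi$ stays invertible with a uniform bound, the IFT can be reapplied at each point, so the branch continues over the whole admissible range of $\lambda$.

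\textbf{Uniqueness near $X_0$ and item 3.} Uniqueness near $X_0$ follows from the local uniqueness in the IFT, or equivalently from the strong monotonicity of the gradient in the chart. For item 3, differentiate $\Phi(B(\lambda),\lambda)=0$ to get $B'(\lambda)=-(\partial_B\Phi)^{-1}\partial_\lambda\Phi$. Here $\partial_\lambda\Phi$ is the chart gradient of $R$, which has norm at most $\|\Pi_T G\|_{\rm F}\le Mn$. Hence $\|B'(\lambda)\|\le Mn/(2(\eta_K-\lambda Mn))$. To keep the constant independent of $\lambda$, I would either use the $\ell$-part of the threshold, which keeps $\eta_K-\lambda Mn$ bounded below by a fixed fraction of $\eta_K$ on a slightly smaller range, or simply absorb this into $C$. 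Integrating in $\lambda$ and using that the chart is Lipschitz near $0$ then gives $\|X_\star(\lambda)-X_0\|_{\rm F}\le C\lambda/\eta_K$.

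\textbf{Main obstacle.} The delicate step is the continuation argument. One must show that the IFT branch does not leave the chart domain, and that $X_\star$ stays the top-$K$ eigenprojector of $B_\lambda$ rather than some other invariant subspace. I would handle both by a maximal-interval argument. The set of admissible $\lambda$ on which the branch exists with gap $\eta_K-\lambda Mn>0$ is open by the IFT. It is closed because the derivative bound keeps $B(\lambda)$ bounded and the gap bound passes to limits. It is therefore the whole admissible interval.
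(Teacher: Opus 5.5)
Your proof is correct in substance. For item 4 and for the range of $\lambda$ it covers, it takes a genuinely different route from the paper.

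\textbf{How the paper argues.} It applies the implicit function theorem to the commutator map $\Phi(X,\lambda)=[X,A-\tfrac{\lambda}{2}\nabla R(X)]$. The derivative $\Xi\mapsto[\Xi,A]$ at $(X_0,0)$ is bounded below by $\eta_K$, and item 3 comes from a first-order expansion. For item 4 it bounds the Hessian of $F_\lambda$ \emph{at $X_0$}, not at $X_\star(\lambda)$, by $2\eta_K-\lambda(\ell+2Mn)$; this crude perturbation of the $\lambda=0$ Hessian is where the stated threshold comes from. It then appeals to ``smoothness'' to carry positivity over to $X_\star(\lambda)$. Both the IFT branch and this transfer are justified only for small $\lambda$, so the paper's argument is really local in $\lambda$.

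\textbf{What your route buys.} You evaluate the Hessian at the critical point itself. The first-order condition identifies $X_\star(\lambda)$ as the top-$K$ eigenprojector of $A-\tfrac{\lambda}{2}G(X_\star)$, and Weyl's inequality then gives the lower bound $2(\eta_K-\lambda Mn)$. Combined with your maximal-interval continuation, this delivers items 1, 2 and 4 on the whole range $\lambda(\ell+2Mn)<2\eta_K$. It also shows that, given $g''\ge 0$, the $\ell$ term is not needed for positive definiteness at all. The price is the spectral identification step, which you should make explicit. Since $X_\star$ commutes with the top-$K$ spectral projector $P_\lambda$ of $A-\tfrac{\lambda}{2}G(X_\star)$, $\operatorname{tr}(X_\star P_\lambda)$ is an integer that depends continuously on $\lambda$. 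It equals $K$ at $\lambda=0$, hence equals $K$ throughout, which forces $X_\star=P_\lambda$.

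\textbf{Points to tighten.}
\begin{enumerate}
\item Your chart is not isometric at $B=0$: $\|dX(0)[B]\|_{\rm F}=\sqrt{2}\|B\|_{\rm F}$. This only changes constants.
\item The continuation is cleaner done intrinsically than in the fixed chart centred at $X_0$. ``$B(\lambda)$ bounded'' neither keeps you inside the chart's injectivity domain (principal angles below $\pi/2$) nor gives a uniform lower bound on the chart Jacobian away from $B=0$. Differentiating $\operatorname{grad}F_\lambda(X_\star(\lambda))=0$ gives $X_\star'(\lambda)=-\operatorname{Hess}F_\lambda(X_\star)^{-1}[\Pi_{T}G(X_\star)]$. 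Hence $\|X_\star'(\lambda)\|_{\rm F}\le Mn/(2(\eta_K-\lambda Mn))$, and the open--closed argument runs on $\mathcal P_K$ with recentred IFT applications.
\item For item 3, integrating this derivative bound gives $-\tfrac12\log(1-\lambda Mn/\eta_K)$. That is uniformly $O(\lambda/\eta_K)$ only when $\ell>0$. For affine $g$ (so $\ell=0$), ``absorbing into $C$'' fails near the threshold. You already have the ingredients for a uniform bound: $X_\star(\lambda)$ is the top-$K$ projector of $A+E$ with $\|E\|_{\rm F}\le\lambda Mn/2<\eta_K/2$. The Davis--Kahan $\sin\Theta$ theorem then gives $\|X_\star(\lambda)-X_0\|_{\rm F}\le\sqrt{2}\,\lambda Mn/\eta_K$ on the whole range.
\end{enumerate}
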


\begin{proof}
For $\lambda=0$, $F_0(X)=-2\langle A,X\rangle$. By Ky Fan's inequality, $\langle A,X\rangle\le\sum_{i=1}^K\lambda_i$, with equality iff $X=X_0$; hence $X_0$ is the unique minimizer of $F_0$.

The Euclidean gradient is $\nabla F_\lambda(X)=-2A+\lambda\nabla R(X)$, where $(\nabla R(X))_{ij}=g'(X_{ij})$. By Theorem~\ref{Riemannian First}, $X$ is critical iff $[X,\nabla F_\lambda(X)]=0$. Define
\[
\Phi(X,\lambda):=[X,A-\tfrac{\lambda}{2}\nabla R(X)].
\]
Then $\Phi(X_0,0)=0$. By orthogonal invariance, take $A=\Lambda=\operatorname{diag}(\lambda_1,\ldots,\lambda_n)$. For a tangent vector $\Xi=\begin{pmatrix}0&B^\top\\B&0\end{pmatrix}$ in the block decomposition $\Lambda=\operatorname{diag}(\Lambda_1,\Lambda_2)$, one computes
\[
[\Xi,\Lambda]=
\begin{pmatrix}
0&(\Lambda_2-\Lambda_1)B^\top\\
(\Lambda_1-\Lambda_2)B&0
\end{pmatrix},
\]
so $\|[\Xi,\Lambda]\|_{\mathrm F}\ge \eta_K\|\Xi\|_{\mathrm F}$. Since $D_X\Phi(X_0,0)[\Xi]=[\Xi,A]$, the derivative is injective and hence invertible. The implicit function theorem yields a unique smooth branch $X_\star(\lambda)$ near $\lambda=0$ with $X_\star(0)=X_0$. Expanding $\Phi(X_\star(\lambda),\lambda)=0$ gives
\[
D_X\Phi(X_0,0)[X_\star(\lambda)-X_0]
=
\frac{\lambda}{2}[X_0,\nabla R(X_0)]+O(\lambda^2),
\]
and since $\|D_X\Phi(X_0,0)^{-1}\|_{\rm op}\le \eta_K^{-1}$, we obtain $\|X_\star(\lambda)-X_0\|_{\mathrm F}\le C\lambda/\eta_K$.

For the Hessian, recall
\[
\begin{aligned}
&\langle \operatorname{Hess}F_\lambda(X_0)[\Xi],\Xi\rangle\\
=&
\langle \nabla^2F_\lambda(X_0)[\Xi],\Xi\rangle
-
2\langle (2X_0-I)\nabla F_\lambda(X_0),\Xi^2\rangle.
\end{aligned}
\]
At $\lambda=0$, $\nabla^2F_0=0$, $\nabla F_0=-2A$, hence
\[
\langle \operatorname{Hess}F_0(X_0)[\Xi],\Xi\rangle
=
4\sum_{i\le K}\sum_{j>K}(\lambda_i-\lambda_j)B_{ji}^2
\ge 2\eta_K\|\Xi\|_{\rm F}^2.
\]
The difference satisfies
\[
\begin{aligned}
&\left|\langle \operatorname{Hess}F_\lambda(X_0)[\Xi],\Xi\rangle-\langle \operatorname{Hess}F_0(X_0)[\Xi],\Xi\rangle\right|\\
\le &\lambda(\ell+2Mn)\|\Xi\|_{\rm F}^2,
\end{aligned}
\]
using $0\le g''\le\ell$, $\|\nabla R(X_0)\|_{\rm F}\le Mn$, and $\|\Xi^2\|_*=\|\Xi\|_{\rm F}^2$. Thus
\[
\langle \operatorname{Hess}F_\lambda(X_0)[\Xi],\Xi\rangle
\ge \bigl(2\eta_K-\lambda(\ell+2Mn)\bigr)\|\Xi\|_{\rm F}^2.
\]
If $\lambda(\ell+2Mn)<2\eta_K$, the Hessian is positive definite at $X_0$. By smoothness, it remains positive definite near $X_0$, so $X_\star(\lambda)$ is a strict local minimizer.
\end{proof}

Similarly, replacing the eigengap $\eta_K$ associated with the leading
eigenspace by the minimum spectral gap
\[
\gamma:=\min_{i\neq j}|\lambda_i-\lambda_j|
\]
allows the perturbation argument to be applied to every PCA projector.
Consequently, the entire critical-point landscape of the unregularized
problem is preserved under sufficiently small regularization.

\begin{corollary}[Stability of the critical-point landscape]
\label{cor:global_landscape_stability}

Let $A=A^\top\in\mathbb R^{n\times n}$ have simple eigenvalues $\lambda_1>\cdots>\lambda_n$, and set $\gamma:=\min_{i\neq j}|\lambda_i-\lambda_j|>0$. Let $R(X)=\sum_{i,j}g(X_{ij})$, where $g\in C^2(\mathbb R)$, $0\le g''\le\ell$, and $|g'|\le M$. Then there exists $c>0$, depending only on $A,g,n$, such that if $\lambda(\ell+2Mn)<c\gamma$, the following hold:
\begin{enumerate}
\item Each PCA projector of the unregularized problem gives rise to a unique nearby critical point of $F_\lambda$, depending smoothly on $\lambda$.
\item The critical points of $F_\lambda$ are in one-to-one correspondence with the PCA projectors; hence $\#\operatorname{Crit}(F_\lambda)=\binom{n}{K}$.
\item The Morse index of every critical point is preserved under perturbation. In particular, the perturbation of the leading eigenspace remains a strict local minimizer, while perturbations of all other PCA projectors remain strict saddle points.
\end{enumerate}
\end{corollary}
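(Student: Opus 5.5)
The proof will repeat the implicit-function argument of Theorem~\ref{thm:stability} at every PCA projector, and then add a compactness argument to rule out critical points that do not come from a PCA projector.

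\textbf{Step 1: local branches.} At $\lambda=0$, Theorem~\ref{Riemannian First} says $X$ is critical iff $[X,A]=0$. With simple eigenvalues, this holds iff $X=X_S:=\sum_{i\in S}u_iu_i^\top$ for some $S\subset\{1,\dots,n\}$ with $|S|=K$. There are exactly $\binom nK$ such points, and they are isolated.

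Fix $S$, diagonalize $A$, and write a tangent vector at $X_S$ as $\Xi$ with entries $\Xi_{ij}$, $i\in S$, $j\notin S$. The computation in the proof of Theorem~\ref{thm:stability} gives
\[
\|[\Xi,A]\|_{\rm F}\ge\min_{i\in S,\,j\notin S}|\lambda_i-\lambda_j|\,\|\Xi\|_{\rm F}\ge\gamma\|\Xi\|_{\rm F}.
\]
So $D_X\Phi(X_S,0)$ is invertible from $T_{X_S}\mathcal P_K$ onto its range. To make this rigorous, I will work in a chart: either $X=e^{\Omega}X_Se^{-\Omega}$ with $\Omega$ skew and block-off-diagonal, or the Riemannian gradient viewed as a section of the tangent bundle.

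The implicit function theorem then gives a unique $C^1$ branch $X_S(\lambda)$ near $X_S$. This proves item~1.

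\textbf{Step 2: Morse index.} Repeat the Hessian computation of Theorem~\ref{thm:stability} at $X_S$. At $\lambda=0$ the quadratic form is
\[
4\sum_{i\in S,\,j\notin S}(\lambda_i-\lambda_j)\Xi_{ij}^2,
\]
with every coefficient satisfying $|\lambda_i-\lambda_j|\ge\gamma$. The regularization changes the form by at most $\lambda(\ell+2Mn)\|\Xi\|_{\rm F}^2$. Continuity along the branch adds an $O(\lambda)$ term, which can be absorbed by shrinking $c$. Hence the signature is unchanged: the index equals $\#\{(i,j): i\in S,\ j\notin S,\ \lambda_i<\lambda_j\}$. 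This index is zero only for $S=\{1,\dots,K\}$ and positive otherwise, which gives item~3. (The sign convention follows Theorem~\ref{thm:stability}, where the leading projector is the minimizer of $-2\langle A,X\rangle$.)

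\textbf{Step 3: no other critical points (main obstacle).} I must show every critical point of $F_\lambda$ lies in one of the neighborhoods from Step~1.

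Since $\|\nabla R\|_{\rm F}\le Mn$, a critical point satisfies
\[
\|[X,A]\|_{\rm F}=\tfrac\lambda2\|[X,\nabla R(X)]\|_{\rm F}\le \lambda Mn.
\]
I then need a quantitative converse, a {\L}ojasiewicz-type bound
\[
\|[X,A]\|_{\rm F}\ge \kappa\,\gamma\,\operatorname{dist}\bigl(X,\{X_S\}\bigr),
\]
with $\kappa$ depending only on $n,K$.

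First attempt: write $X=UU^\top$. Then
\[
\|[X,A]\|_{\rm F}^2=2\|(I-X)AX\|_{\rm F}^2,
\]
which is the residual of $U$ as an invariant subspace. A Davis--Kahan/$\sin\Theta$ argument should then produce a $\gamma$-separated eigenvalue cluster. The difficulty is that a subspace may mix eigenvectors, so the estimate must be organized carefully in the eigenbasis.

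Fallback: argue by compactness of $\mathcal P_K$. The function $\|[X,A]\|$ vanishes only at the $X_S$ and is bounded below outside their neighborhoods. This yields some constant $c$ depending on $A$, which the statement allows.

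Either way, for $\lambda(\ell+2Mn)<c\gamma$ every critical point lies in a neighborhood where Step~1 guarantees uniqueness. That gives the bijection and $\#\operatorname{Crit}(F_\lambda)=\binom nK$, which is item~2.
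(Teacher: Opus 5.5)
Your proposal is correct and follows the paper's route. The paper's only justification is the remark before the corollary: the implicit-function and Hessian-perturbation argument of Theorem~\ref{thm:stability} can be rerun at every PCA projector $X_S$ with $\gamma$ in place of $\eta_K$, which is exactly your Steps~1--2. Your Step~3 (a compactness lower bound on $\|[X,A]\|_{\rm F}$ away from the $X_S$, played against $\|[X,A]\|_{\rm F}\le\lambda Mn$ at every critical point) supplies the global exclusion argument that the paper omits but that item~2 needs. Your chart version of the implicit function theorem also repairs the paper's ``injective hence invertible'' step, since $D_X\Phi(X_S,0)$ maps $T_{X_S}\mathcal P_K$ into the larger space of skew-symmetric matrices.
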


\begin{algorithm}[t!]
\caption{Riemannian Gradient Projection Algorithm with Backtracking}
\label{alg:GPA}
\begin{algorithmic}[1]

\Require $X_0\in\mathcal P_K$, $\eta_{\max}>0$, $0<\beta,\sigma<1$

\For{$t=0,1,2,\ldots$}
    \State Compute Riemannian gradient $G_t=[X,[X_t, \nabla F_\lambda(X_t)]]$ and set $\eta_t=\eta_{\max}$.
    \While{true}
        \State $Y_t=X_t-\eta_t G_t$.
        \State Compute $X_{\rm new}= R_{\mathcal P_K}(Y_t)$.
        \If{$F_\lambda(X_{\rm new})
        \le F_\lambda(X_t)-\sigma\eta_t\|G_t\|_{\rm F}^2$}
            \State $X_{t+1}=X_{\rm new}$; \textbf{break}.
        \Else
            \State $\eta_t\leftarrow\beta\eta_t$.
        \EndIf
    \EndWhile
\EndFor
\end{algorithmic}
\end{algorithm}

\section{Algorithm}
In this section, we investigate numerical methods for solving optimization problems on the projection matrix manifold $\mathcal P_K$. In particular, we focus on two widely used approaches: the gradient projection method and the Cayley-transformation-based method.

The gradient projection method first performs a Euclidean gradient step and then projects the updated iterate back onto $\mathcal P_K$. This projection requires the computation of a rank-$K$ eigendecomposition, resulting in a per-iteration complexity of $O(n^2K)$.

Motivated by the Cayley transformation for optimizing on the Stiefel manifold~\cite{WenYin2013}, we also consider a Cayley-transformation-based update on the Grassmann manifold. By exploiting the geometric structure of the manifold, the Cayley update preserves feasibility without requiring an eigendecomposition. The dominant computational cost arises from forming matrix products of size $n\times K$ and computing the inverse of a $K\times K$ matrix, yielding an overall complexity of $O(nK^2)+O(K^3)$ per iteration. Consequently, when $n\gg K$, the Cayley-based method is significantly more efficient and scalable than the gradient projection approach.

\subsection{Gradient Projection Algorithm}
%\subsection{Gradient and Projection Operators}

The Euclidean gradient of $F_\lambda(\cdot)$ is
$\nabla F_\lambda(X)
=
-2A
+
\lambda G(X)$,
where $G(X) = \bigl(g'(X_{ij})\bigr)_{i,j}$. Given the current iterate $X_t$, a Riemannian gradient descent step with stepsize $\eta_t>0$ produces
\[
Y_t
=
X_t-\eta_t[X, [X,\nabla F_\lambda(X_t)].
\]
Since $Y_t$ generally does not belong to $\mathcal P_K$, we pull it back onto the feasible set via the optimization:
\begin{equation}\label{update}
X_{t+1}
=
R_{\mathcal P_K}(Y_t)=
\arg\min_{X\in\mathcal P_K}
\|X-Y\|_{\rm F}^2.
\end{equation}
The stepsize \(\eta_t\) is chosen via backtracking line search with parameters
\(0<\beta,\sigma<1\). Starting from \(\eta_{\max}>0\), we iteratively set
\(\eta_t\leftarrow \beta\eta_t\) until the Armijo condition
\begin{equation}\label{eq:armijo}
F_\lambda(X_{t+1})
\le
F_\lambda(X_t)
-
\sigma \eta_t \|\nabla F_\lambda(X_t)\|_{\rm F}^2
\end{equation}
holds.

Each iteration is dominated by the projection onto \(\mathcal P_K\), which requires computing the leading \(K\) eigenvectors of an \(n\times n\) matrix. A full eigendecomposition costs \(O(n^3)\), while Krylov/Lanczos methods reduce this to \(O(n^2K)\) for \(K\ll n\). The gradient evaluation costs \(O(n^2)\), so the projection step is the bottleneck.
\begin{comment}

The update \eqref{eq:GPA} preserves feasibility, i.e.,
\[
X_t\in\mathcal P_K,\quad \forall t\ge0.
\]
Since \(\mathcal P_K\) is compact and \(F\in C^1\), every accumulation point of \(\{X_t\}\) is stationary and satisfies
\[
[X,\nabla F_\lambda(X)]=0.
\]

Under the landscape stability condition \(\lambda\ell<c\gamma\), the objective admits a unique local minimizer, while all other critical points are strict saddles. Hence standard results for projected-gradient methods on smooth manifolds imply convergence to the global minimizer from almost every initialization.

Moreover, near \(X_\star(\lambda)\), the Riemannian Hessian is positive definite, yielding local linear convergence: for sufficiently small fixed \(\eta>0\),
\[
\|X_{t+1}-X_\star(\lambda)\|_{\rm F}
\le
\rho\,\|X_t-X_\star(\lambda)\|_{\rm F},
\qquad \rho\in(0,1).
\]
\end{comment}

\subsection{Cayley Transformation-based Riemannian Gradient Method}

Besides the classical gradient projection method, we consider a more efficient approach that constructs a smooth curve directly on the projection manifold. This avoids repeated projections onto the manifold via eigenvalue decompositions at each iteration, thereby significantly reducing computational overhead.

\begin{algorithm}[t!]
\caption{Cayley--SMW Riemannian Gradient Method}
\label{alg:cayley}
\begin{algorithmic}[1]

\Require $U_0^\top U_0=I_K$, $\tau_{\max}>0$, $0<\beta,\sigma<1$

\For{$k=0,1,\ldots$}

\State $X_k=U_kU_k^\top$

\State
\[
\Xi_k
=
(I-U_kU_k^\top)
\nabla F_\lambda(X_k)U_k
\]

\If{$\|\Xi_k\|_{\rm F}<{\rm tol}$}
\State \textbf{stop}
\EndIf

\State
\[
Y_k=[\,\Xi_k,\;U_k\,],\qquad
J=
\begin{bmatrix}
0&I_K\\
-I_K&0
\end{bmatrix}
\]

\State $\tau_k=\tau_{\max}$

\While{$\tau_k>\tau_{\min}$}

\State
\[
M_k
=
I_{2K}
+\frac{\tau_k}{2}JY_k^\top Y_k
\]

\State Solve
\[
M_kZ_k=JY_k^\top U_k
\]

\State
\[
\widetilde U
=
U_k-\tau_kY_kZ_k
\]

\If{
$
F_\lambda(\widetilde U\widetilde U^\top)
\le
F_\lambda(X_k)
-\sigma\tau_k\|\Xi_k\|_{\rm F}^2
$
}
\State $U_{k+1}=\widetilde U$
\State \textbf{break}
\EndIf

\State $\tau_k\leftarrow\beta\tau_k$

\EndWhile

\EndFor

\State Return $X^\star=U^\star(U^\star)^\top$

\end{algorithmic}
\end{algorithm}

Recall that the Grassmann manifold $\mathrm{Gr}(K,n)$ can be viewed as the quotient space of the Stiefel manifold under the right action of the orthogonal group $O(K)$. The Riemannian gradient of $F_\lambda(X)$ on $\mathrm{Gr}(K,n)$, evaluated at $X = U_k U_k^\top$, admits the representation
\[
\Xi_k = (I - U_k U_k^\top)\,\nabla F_\lambda(U_k U_k^\top)\,U_k,
\]
where $\Xi_k \in T_{U_k}\mathrm{Gr}(K,n)$ denotes the Riemannian gradient in the tangent space at $U_k$.

To perform Riemannian optimization on $\mathrm{Gr}(K,n)$, we construct a Cayley-type retraction based on an orthogonal transformation. Specifically, define the skew-symmetric matrix
\[
\Omega_k = U_k \Xi_k^\top - \Xi_k U_k^\top,
\qquad \Omega_k^\top = -\Omega_k.
\]
Then the Cayley transform generates an orthogonal matrix
\[
Q_k
=
\left(I - \frac{\tau_k}{2}\Omega_k\right)^{-1}
\left(I + \frac{\tau_k}{2}\Omega_k\right),
\]
which preserves orthogonality up to higher-order terms in $\tau_k$. The updated iterate is given by
\[
U_{k+1} = Q_k U_k,
\]
which ensures $U_{k+1} \in \mathrm{St}(K,n)$ and consequently $U_{k+1}U_{k+1}^\top \in \mathrm{Gr}(K,n)$.

\paragraph{Directional derivative}
Here, we derive the directional derivative of the Cayley curve at $\tau = 0$, and quantify the rate of variation of $X(\tau)$ at $\tau = 0$ in terms of the order $\mathcal{O}(\|\Xi\|_{\rm F}^2)$.\begin{proposition}[Descent Property of the Cayley Curve] \label{Cayley descent}
Let
\[
X=UU^\top\in\mathcal P_K,
\qquad
\Xi=(I-UU^\top)\nabla F_\lambda(X)U,
\]
and define $\Omega=-\Xi U^\top+U\Xi^\top$ . Consider the Cayley curve
\[
U(\tau)
=
\left(I-\frac{\tau}{2}\Omega\right)^{-1}
\left(I+\frac{\tau}{2}\Omega\right)U,
\qquad
X(\tau)=U(\tau)U(\tau)^\top .
\]
Then \(X(\tau)\in\mathcal P_K\) for all sufficiently small \(\tau\), and
\[
\frac{d}{d\tau}F_\lambda(X(\tau))
\Big|_{\tau=0}
=
-2\|\Xi\|_{\rm F}^2.
\]
In particular, if \(\Xi\neq 0\), then \(X(\tau)\) is a strict descent curve for \(F_\lambda\).
\end{proposition}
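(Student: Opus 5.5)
The plan is to verify feasibility directly from the structure of the Cayley transform, and then compute the derivative with the chain rule, using $X'(0)$ expressed through $\Xi$.

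\emph{Feasibility.} Since $\Omega^\top=-\Omega$, its eigenvalues are purely imaginary. Hence $I-\frac{\tau}{2}\Omega$ is invertible for every real $\tau$, not just for small $\tau$. The matrices $I-\frac{\tau}{2}\Omega$ and $I+\frac{\tau}{2}\Omega$ commute. Transposing and using skew-symmetry then gives $Q(\tau)^\top Q(\tau)=I$ for the Cayley factor $Q(\tau)$. Therefore $U(\tau)^\top U(\tau)=U^\top U=I_K$, so $X(\tau)=U(\tau)U(\tau)^\top$ is symmetric, idempotent and has trace $K$, i.e.\ $X(\tau)\in\mathcal P_K$. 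Smoothness of $\tau\mapsto U(\tau)$ follows because matrix inversion is smooth on invertible matrices.

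\emph{Velocity.} Differentiate $(I-\frac{\tau}{2}\Omega)U(\tau)=(I+\frac{\tau}{2}\Omega)U$ at $\tau=0$ to get $U'(0)=\Omega U$. Substituting $\Omega=-\Xi U^\top+U\Xi^\top$ gives
\[
U'(0)=-\Xi\,U^\top U+U\,\Xi^\top U .
\]
Here $U^\top U=I_K$, and $\Xi^\top U=U^\top\nabla F_\lambda(X)^\top(I-UU^\top)U=0$ because $(I-UU^\top)U=0$. Hence $U'(0)=-\Xi$ and
\[
X'(0)=U'(0)U^\top+U\,U'(0)^\top=-(\Xi U^\top+U\Xi^\top).
\]

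\emph{Derivative of the objective.} By the chain rule, $\frac{d}{d\tau}F_\lambda(X(\tau))|_{\tau=0}=\langle \nabla F_\lambda(X),X'(0)\rangle_{\rm F}$. The key observation is that $\nabla F_\lambda(X)=-2A+\lambda G(X)$ is symmetric: $A$ is symmetric, and $X$ symmetric makes $G(X)=(g'(X_{ij}))$ symmetric. Write $P=UU^\top$ and $\nabla F=\nabla F_\lambda(X)$. Symmetry gives
\[
\langle \nabla F,\Xi U^\top+U\Xi^\top\rangle=2\,\mathrm{tr}(U^\top\nabla F\,\Xi).
\]
Substituting $\Xi=(I-P)\nabla F\,U$ and using that $I-P$ is a symmetric idempotent,
\[
\mathrm{tr}\bigl(U^\top\nabla F(I-P)\nabla F\,U\bigr)=\|(I-P)\nabla F\,U\|_{\rm F}^2=\|\Xi\|_{\rm F}^2 .
\]
So the derivative equals $-2\|\Xi\|_{\rm F}^2$, and it is strictly negative whenever $\Xi\neq0$, which gives the descent claim.

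The main points needing care are the identity $\Xi^\top U=0$, which collapses $U'(0)$ to $-\Xi$, and the symmetry of $\nabla F_\lambda(X)$. Everything else is routine algebra.
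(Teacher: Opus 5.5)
Your proof is correct and follows essentially the same route as the paper: orthogonality of the Cayley factor gives feasibility, $U'(0)=\Omega U=-\Xi$ follows from $\Xi^\top U=0$, and the chain rule together with the symmetry of $\nabla F_\lambda(X)$ reduces the derivative to $-2\|\Xi\|_{\rm F}^2$. Your differences from the paper are minor and only make its argument more explicit: you note invertibility of $I\pm\frac{\tau}{2}\Omega$ for all real $\tau$, you differentiate the implicit relation rather than the inverse, and you justify the symmetry of $\nabla F_\lambda(X)=-2A+\lambda G(X)$ explicitly.
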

\begin{proof}
Since $\Omega^\top=-\Omega$, the Cayley transform  
$Q(\tau)=(I-\frac{\tau}{2}\Omega)^{-1}(I+\frac{\tau}{2}\Omega)$ is orthogonal. Thus $U(\tau)=Q(\tau)U$ satisfies $U(\tau)^\top U(\tau)=I_K$, so $X(\tau)=U(\tau)U(\tau)^\top\in\mathcal P_K$.

Differentiating at $\tau=0$: using $(A^{-1})'=-A^{-1}A'A^{-1}$ with $A=I-\frac{\tau}{2}\Omega$, we get $Q'(0)=\Omega$, hence $U'(0)=\Omega U$. Since $\Omega=-\Xi U^\top+U\Xi^\top$ and $U^\top\Xi=0$, we have $\Omega U=-\Xi$, so $U'(0)=-\Xi$. Differentiating $X(\tau)$ gives 
\[
X'(0)=-\Xi U^\top-U\Xi^\top.
\] 
By the chain rule and symmetry of $\nabla F_\lambda(X)$,
\[
\left.\frac{d}{d\tau}F_\lambda(X(\tau))\right|_{\tau=0}
=-2\langle \nabla F_\lambda(X)U,\Xi\rangle.
\]
Using $\Xi=(I-UU^\top)\nabla F_\lambda(X)U$ and $UU^\top\Xi=0$,
\[
\langle \nabla F_\lambda(X)U,\Xi\rangle
=\langle (I-UU^\top)\nabla F_\lambda(X)U,\Xi\rangle
=\|\Xi\|_{\rm F}^2.
\]
Thus the derivative equals $-2\|\Xi\|_{\rm F}^2$. If $\Xi\neq0$, this is negative, so the Cayley curve along the negative Riemannian gradient is a strict descent direction.
\end{proof}

Based on this proposition, a suitable step size $\tau$ can be selected via a backtracking line search, which balances sufficient descent with ease of optimization. Specifically, we initialize $\tau_k$ with $\tau_{\max} > 0$ and iteratively update
\[
\tau_k \leftarrow \beta \tau_k, \qquad 0 < \beta < 1,
\]
until the Armijo condition
\[
F_\lambda(U_{k+1}U_{k+1}^\top)
\le
F_\lambda(U_kU_k^\top)
-
\sigma \tau_k \|\Xi_k\|_{\rm F}^2
\]
is satisfied for some $0 < \sigma < 1$.
\paragraph{A faster approach for the Cayley transformation}
Noticing that the explicit representation of $Q_k$ involve the computation of the inverse $n\times n$ matrix $\left(I - \frac{\tau_k}{2}\Omega_k\right)$, which cost $O(n^3)$.
To avoid forming $Q_k \in \mathbb R^{n\times n}$ explicitly, we exploit the low-rank structure of $\Omega_k$. Define
\[
Y_k = [\,\Xi_k,\; U_k\,]\in\mathbb R^{n\times 2K}, \qquad
J =
\begin{bmatrix}
0 & I_K \\
-I_K & 0
\end{bmatrix},
\]
so that
\[
\Omega_k =- Y_k J Y_k^\top.
\]

Using the Sherman--Morrison--Woodbury formula, the Cayley update can be written in low-rank form as
\[
U_{k+1}
=
U_k
-
\tau_k Y_k
\left(
I + \frac{\tau_k}{2} J Y_k^\top Y_k
\right)^{-1}
J Y_k^\top U_k.
\]
\subsection{Complexity and Convergence Analysis}
\paragraph{Complexity}This implementation only requires the inversion of a $2K\times 2K$ matrix, leading to a per-iteration complexity of
\[
O(nK^2) + O(K^3).
\]

Compared with eigendecomposition-based methods, the computational complexity is reduced from $O(n^2K)$ to $O(nK^2) + O(K^3)$. In typical applications, the embedding dimension (or number of clusters) satisfies $K \ll n$, making the proposed approach significantly more efficient in large-scale settings.

\paragraph{Convergence.}
Assume that $F_\lambda\in C^1$ has a Lipschitz continuous gradient on the compact manifold $\mathcal P_K$. Then every accumulation point $X_\star$ of $\{X_k\}$ is a stationary point satisfying
\[
[X_\star,\nabla F_\lambda(X_\star)]=0,
\qquad
\|\Xi_k\|_{\rm F}\to 0.
\]
Furthermore, if $X_\star$ is a nondegenerate local minimizer with
\[
\mathrm{Hess}_{\mathcal P_K}F_\lambda(X_\star)\succeq \mu I,
\qquad \mu>0,
\]
then the iterates converge locally linearly to $X_\star$, i.e.,
\[
\|X_{k+1}-X_\star\|_{\rm F}
\le
\rho\,\|X_k-X_\star\|_{\rm F},
\qquad
\rho\in(0,1),
\]
for all sufficiently large $k$.

\section{Numerical Experiment}

\begin{figure*}[t]
%{\hspace{-0.09\linewidth}
\centering
\includegraphics[width=0.9\linewidth]{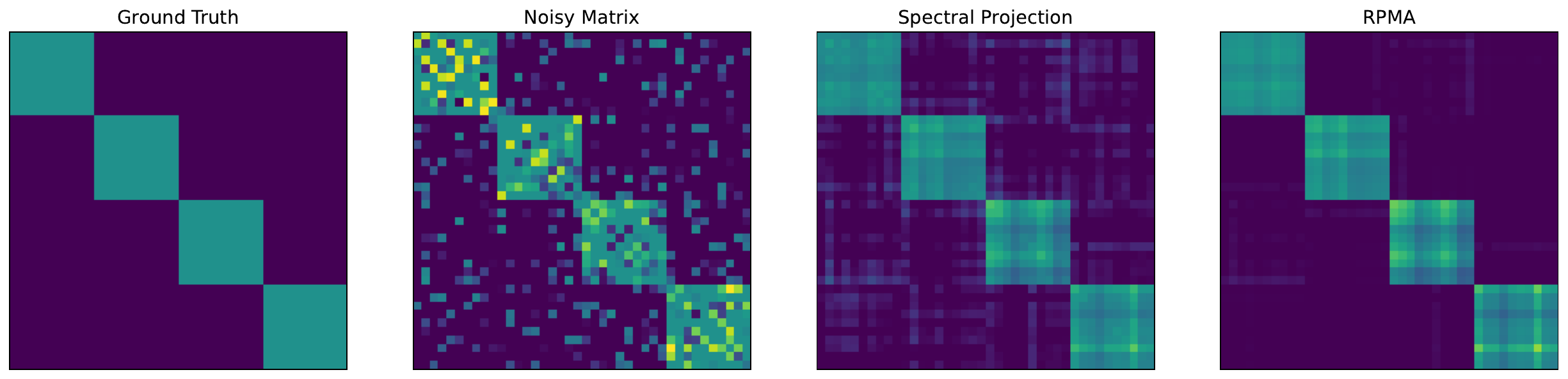}
\caption{Comparison of the spectral projection estimator and RPMA-S for projection matrix recovery. RPMA-S exploits sparsity regularization to recover a cleaner and sparser projection matrix. \label{solutions_compare}}
\end{figure*}

In this section, we first perform synthetic experiments to demonstrate
the effectiveness of the proposed regularized projection matrix approximation
(RPMA) framework. Specifically, we show that RPMA can recover the underlying
clean projection matrix more accurately than the classical spectral projection
estimator in the presence of perturbations. We consider two variants of the
RPMA model. 

The first variant, referred to as RPMA-S, incorporates an entry-wise robust
regularization term and is formulated as
\[
\min_{X\in {\mathcal P}_K}
-2 \langle X,A \rangle
+\lambda \sum_{i,j}g_\delta(X_{i,j}),
\]
where \(g_\delta(\cdot)\) denotes the Huber loss penalty with parameter
\(\delta\). This formulation aims to exploit the inherent sparsity or
regularity of the projection matrix while preserving its rank-\(K\) projection
structure.

To further incorporate additional structural information, we introduce the
structure-enhanced variant RPMA-NS, which is defined as
\[
\begin{aligned}
\min_{X\in {\mathcal P}_K}\quad
&-2 \langle X,A \rangle
+\lambda \sum_{i,j}g_\delta(X_{i,j})  \\
&+\mu \sum_{i,j}(\min\{X_{i,j},0\})^2
+\rho\|X{\bf 1}_n-{\bf 1}_n\|_2^2 .
\end{aligned}
\]
The additional two regularization terms enforce non-negativity and the
row-sum-to-one constraint, respectively, thereby incorporating prior
structural information of the projection matrix. The coefficients
\(\lambda\), \(\mu\), and \(\rho\) control the trade-off between data fidelity
and structural regularization.

We then evaluate the clustering performance of RPMA-S and
its structure-enhanced variant, RPMA-NS, on four real-world datasets:
COIL20, the AT\&T Database of Faces, Semeion, and DIGIT-10. The
experiments are organized into balanced-sample and imbalanced-sample
settings to assess both the clustering effectiveness and the robustness
of the proposed methods under homogeneous and heterogeneous class-size
distributions.\footnote{The source code for RPMA-S and RPMA-NS,
together with the experimental scripts, is publicly available at
\url{https://github.com/liangzhuan/RPMA}.}

%\footnote{\url{https://archive.ics.uci.edu/ml/datasets/Multiple+Features.https://www.kaggle.com/datasets/jessicali9530/coil100.https://archive.ics.uci.edu/dataset/240/human+activity+recognition+using+smartphones}.}
\subsection{Experiment on synthetic data}
%Firstly, we compare our algorithm with competitors on a subset of each dataset, which we denote as DIGIT-5, COIL-10, and HAR-3. These subsets consist of five, ten, and three classes, respectively.
%Subsequently, we utilize the entire data from each dataset, denoting them as DIGIT-10, COIL-20, and HAR-6, comprising ten, twenty, and six classes, respectively.

\begin{figure*}[!t]
    \centering
    \subfloat[Reconstruction error of RPMA-S, measured by 
    $\|\widehat{X}-X^*\|_{\rm F}$, where $\widehat{X}$ denotes the solution of the Huber-loss-regularized projection approximation model under different settings of the regularization parameter $\lambda$ and the Huber loss parameter $\delta$.
    \label{improve1}]
    {
        \includegraphics[width=0.46\linewidth]{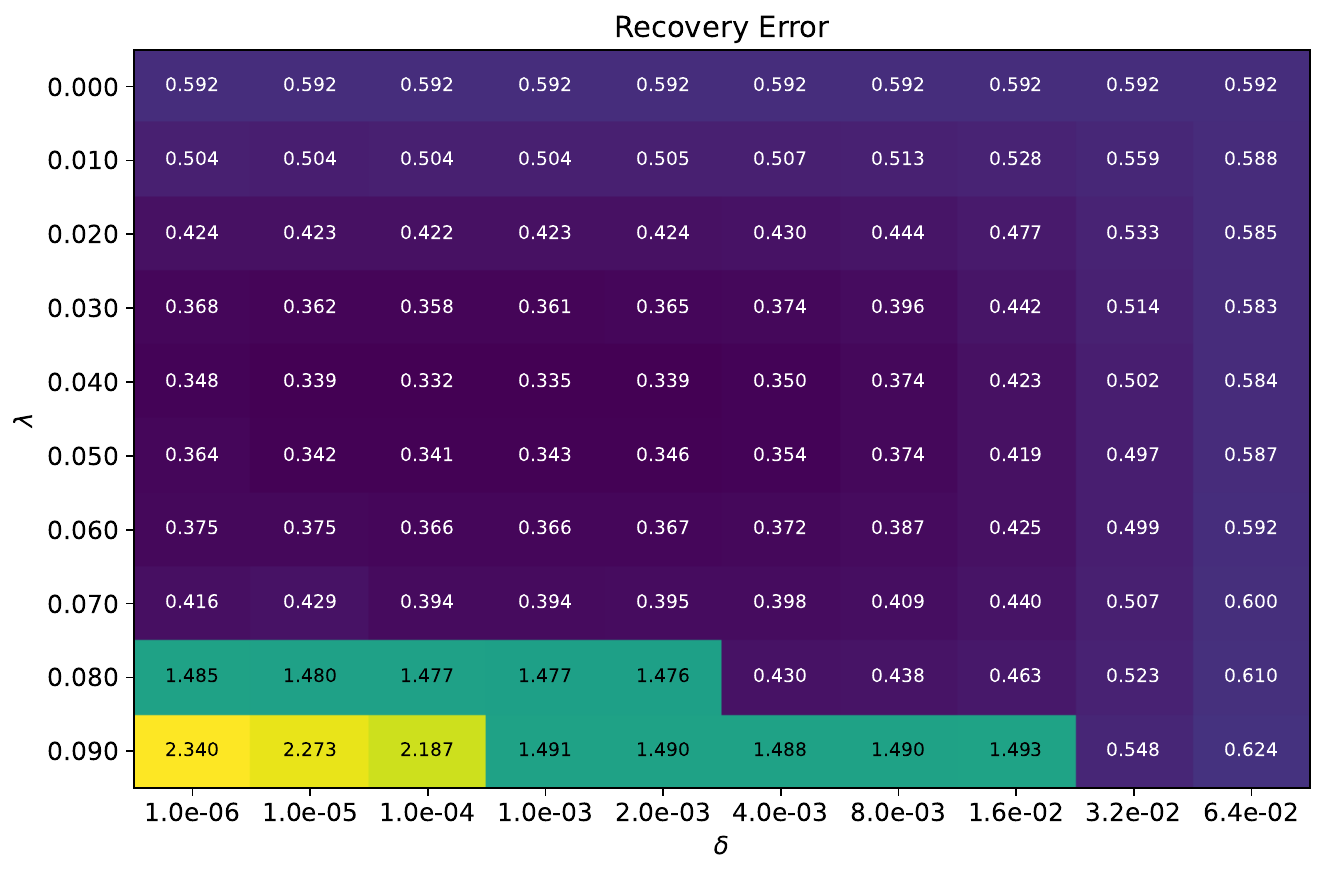}
        \label{fig:error_heatmap}
    }
    \hfill
    \subfloat[Relative recovery improvement of RPMA-S over the spectral projection estimator, measured by
    $\frac{\|X_{\rm spe}-X^*\|_{\rm F}-\|\widehat{X}-X^*\|_{\rm F}}
    {\|X_{\rm spe}-X^*\|_{\rm F}}$,
    where $\widehat{X}$ denotes the solution of the Huber-loss-regularized projection approximation model.
    \label{improve2}]
    {
        \includegraphics[width=0.46\linewidth]{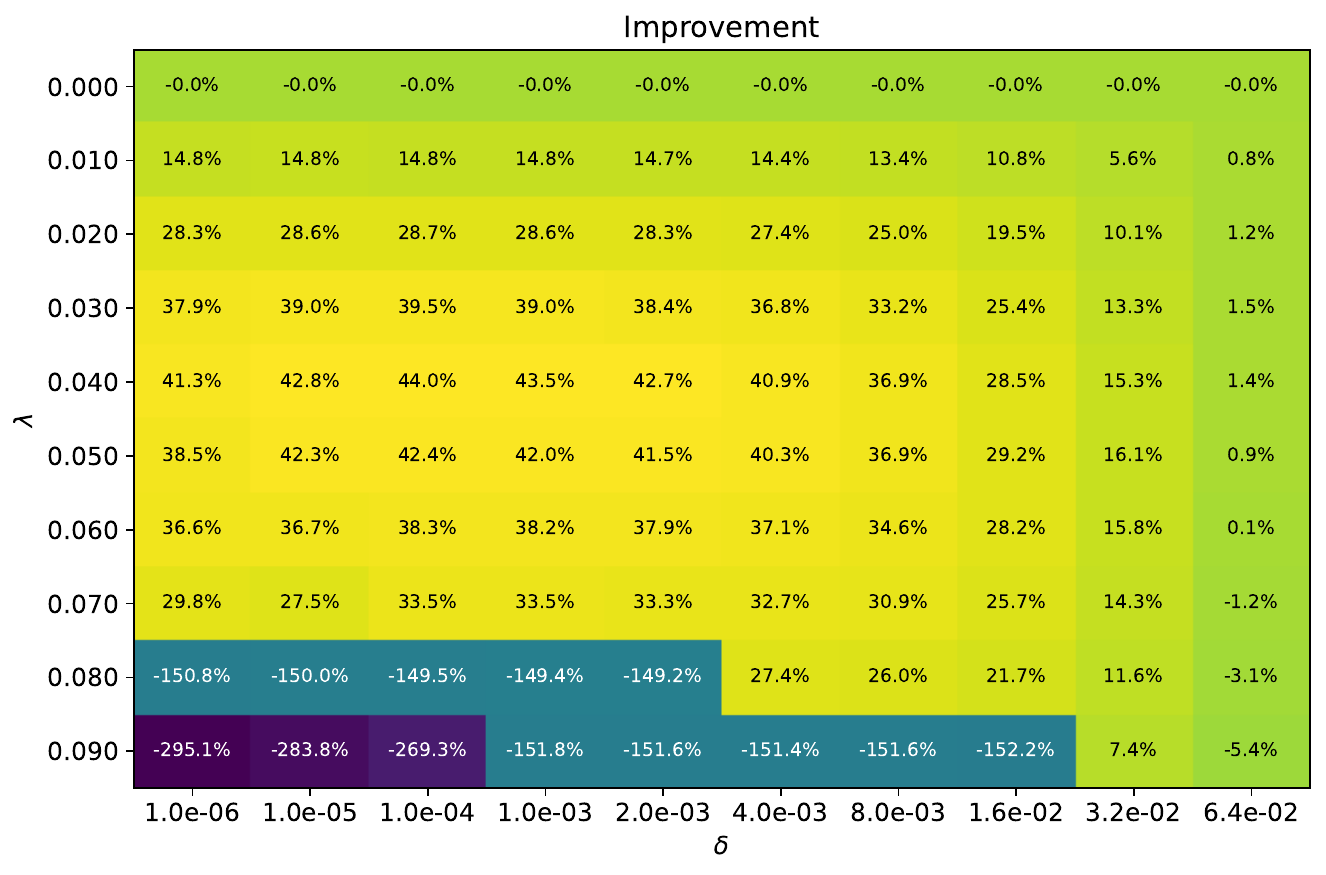}
        \label{fig:improvement_heatmap}
    }
    \caption{Performance comparison under different settings, initialized with the top-$K$ eigenvectors of $X$.}
    \label{fig:heatmaps}
\end{figure*}

\begin{table*}[h!]
\centering
\caption{Recovery error (left) and relative improvement over spectral projection (right) for different choices of $\lambda$ and $\delta$}
\label{tab:lambda_delta_canonical_basis}
\renewcommand{\arraystretch}{1.15}
\setlength{\tabcolsep}{3.8pt}
\resizebox{0.95\linewidth}{!}{
\begin{tabular}{c|ccccccccc|ccccccccc}
\toprule
&
\multicolumn{9}{c|}{Recovery Error $\|\widehat X-X^*\|_{\rm F}$}
&
\multicolumn{9}{c}{Improvement (\%)}\\
\cmidrule(lr){2-10}\cmidrule(lr){11-19}

\diagbox{$\lambda$}{$\delta$}
& $10^{-5}$ & $10^{-4}$ & $10^{-3}$ & $0.002$ & $0.004$
& $0.008$ & $0.016$ & $0.032$ & $0.064$
& $10^{-5}$ & $10^{-4}$ & $10^{-3}$ & $0.002$ & $0.004$
& $0.008$ & $0.016$ & $0.032$ & $0.064$\\
\midrule

0.00 &
0.592 & 0.592 & 0.592 & 0.592 & 0.592 & 0.592 & 0.592 & 0.592 & 0.592 &
0.0 & 0.0 & 0.0 & 0.0 & 0.0 & 0.0 & 0.0 & 0.0 & 0.0 \\

0.01 &
0.504 & 0.504 & 0.504 & 0.505 & 0.507 & 0.513 & 0.528 & 0.559 & 0.588 &
15.0 & 15.0 & 15.0 & 15.0 & 14.0 & 13.0 & 11.0 & 6.0 & 1.0 \\

0.02 &
0.423 & 0.422 & 0.423 & 0.424 & 0.430 & 0.444 & 0.477 & 0.533 & 0.585 &
29.0 & 29.0 & 29.0 & 28.0 & 27.0 & 25.0 & 20.0 & 10.0 & 1.0 \\

0.03 &
0.362 & 0.358 & 0.361 & 0.365 & 0.374 & 0.396 & 0.442 & 0.514 & 0.583 &
39.0 & 39.0 & 39.0 & 38.0 & 37.0 & 33.0 & 25.0 & 13.0 & 1.0 \\

0.04 &
0.339 & 0.332 & 0.335 & 0.339 & 0.350 & 0.374 & 0.424 & 0.502 & 0.584 &
43.0 & 44.0 & 43.0 & 43.0 & 41.0 & 37.0 & 28.0 & 15.0 & 1.0 \\

0.05 &
0.342 & 0.341 & 0.343 & 0.346 & 0.354 & 0.374 & 0.419 & 0.497 & 0.587 &
42.0 & 42.0 & 42.0 & 42.0 & 40.0 & 37.0 & 29.0 & 16.0 & 1.0 \\

0.06 &
0.375 & 0.366 & 0.366 & 0.367 & 0.372 & 0.387 & 0.425 & 0.499 & 0.592 &
37.0 & 38.0 & 38.0 & 38.0 & 37.0 & 35.0 & 28.0 & 16.0 & 0.0 \\

0.07 &
\bf 0.429 & 0.394 & 0.394 & 0.395 & 0.398 & 0.409 & 0.440 & 0.507 & 0.600 &\bf 
28.0 & 33.0 & 33.0 & 33.0 & 33.0 & 31.0 & 26.0 & 14.0 & -0.0 \\

0.08 &
\bf 1.480 & \bf 1.477 & \bf 1.477 & \bf 1.476 & 0.430 & 0.438 & 0.463 & 0.523 & 0.610 &
\bf -150.0 & \bf -149.0 &\bf  -149.0 &\bf  -149.0 &27.0 & 26.0 & 22.0 & 12.0 & -0.0 \\

0.09 &
\bf 2.273 & \bf 2.187 & \bf 1.491 & \bf 1.490 & \bf 1.488 & \bf 1.490 &\bf  1.493 & 0.548 & 0.624 &
\bf -284.0 & \bf -269.0 & \bf -152.0 &\bf  -152.0 &\bf  -151.0 & \bf -152.0 &\bf  -152.0 & 7.0 & -0.1 \\
\bottomrule
\end{tabular}}
\end{table*}

Firstly, we evaluate the recovery capability of the sparse regularized projection approximation (RPMA-S) model. We consider a ground-truth projection matrix \(X^*\in\mathbb{R}^{40\times 40}\) with rank \(K=4\). Specifically, \(X^*\) consists of four diagonal blocks of size \(10\times10\), where each entry within a block is equal to \(1/10\), and all off-block entries are zero. To simulate noisy observations, we construct
\[
X= X^* + E,
\]
where \(E\) is a sparse noise matrix. For each entry \((i,j)\), we independently select it with probability \(0.5\); if selected, the entry is assigned a random value drawn uniformly from \([-1,1]\), and otherwise it is set to zero. This procedure produces a highly corrupted observation matrix while preserving the underlying low-rank projection structure.

The solutions obtained by the spectral projection estimator and the regularized projection approximation model with the Huber penalty are presented in Figure~\ref{solutions_compare}. As can be seen, the spectral projection estimator produces numerous small nonzero entries in the off-diagonal blocks, reflecting the influence of noise on the recovered projection matrix. In contrast, the Huber-regularized model yields a substantially cleaner solution, with most off-diagonal entries effectively suppressed. Consequently, the recovered projection matrix more closely resembles the underlying block-diagonal structure, demonstrating the ability of the regularization term to mitigate noise and enhance recovery accuracy.

%In addition, we test the performance our model under different setting of the regularization parameter $\lambda$ and the Huber loss parameter $\delta$. The performance for different setting of parameters is shown in Figure~\ref{improve}. We can conclude that the improvement is obvious and the relative recovery of RPMA can up to 42.8\% under the proper setting of parameters. This obeservation is coherent with our intuiation and validate the effectiveness of our model.

In addition, we investigate the sensitivity of RPMA-S to the regularization parameter $\lambda$ and the Huber parameter $\delta$. The recovery performance under different parameter settings is reported in Figure~\ref{improve1} and Figure~\ref{improve2}. We observe that RPMA-S consistently outperforms the spectral projection estimator across a broad range of parameter choices. In particular, with appropriately selected values of $\lambda$ and $\delta$, the relative improvement in recovery accuracy reaches up to $44.0\%$. This significant gain confirms the advantage of incorporating sparsity-promoting regularization into the projection approximation framework and provides strong empirical evidence for the effectiveness of the proposed model.

Finally, to validate the global convergence property of the Riemannian Gradient method, we initialize \(X= UU^T\) from a completely different starting point by choosing the first \(K\) canonical basis vectors of \(\mathbb{R}^n\), namely,
\[
U=[e_1,e_2,\ldots,e_K],
\]
instead of using the eigenvectors associated with \(X\). This initialization corresponds to a substantially different \(K\)-dimensional subspace and therefore provides a stringent test of the algorithm's global convergence behavior. The recovery error and the improvement for the new initialization is shown in Table~\ref{tab:lambda_delta_canonical_basis}. We can conclude that:

\begin{enumerate}
\item When $\lambda$ is small ($\lambda<0.05$), Table~\ref{tab:lambda_delta_canonical_basis} demonstrate almost the same results compared with Figure~\ref{improve1} and Figure~\ref{improve2}, which validate our global convergence property in Theorem~\ref{thm:stability}.
\item The sparse projection approximation model performs well for relatively small values of $\delta$, and the improvement in approximation quality, measured in the Frobenius norm, can reach up to $44.0\%$.
It is worth noting that as $\delta \rightarrow 0^+$, the Huber loss converges uniformly to the $\ell_1$ norm, thereby promoting sparsity in the resulting solution.

\item As $\lambda$ increases, the curvature induced by the regularization term eventually dominates the spectral gap. Consequently, for $\lambda=0.08$ and $0.09$, the Cayley--SMW Riemannian Gradient method fails to recover the global optimum when $\delta$ is small. However, this phenomenon can be alleviated by increasing $\delta$; for instance, the global optimum is recovered when $\delta>0.004$ for $\lambda=0.08$ and when $\delta>0.064$ for $\lambda=0.09$. This observation is consistent with Theorem~\ref{thm:stability}, where the sufficient condition for preserving the global minimizer only requires $\lambda \leq c\eta_K/\ell = c\eta_K\delta$.
\end{enumerate}

\subsection{Experiments on Real-World Datasets}
\label{subsec:real_data}

\begin{table*}[t!]
\centering
\caption{The objectives and constraints of the baseline algorithms.}
\label{tab:related_algorithms}

\resizebox{\linewidth}{!}{
\begin{tabular}{c|c|c|c|c|c}
\hline\hline
& SDP-1
& SDP-2
& SDP-U
& Spectral Proj.
& SLSA
\\
\hline

Objectives
&
$\displaystyle
\max_X\ \langle A,X\rangle
$
&
$\displaystyle
\max_X\ \langle A,X\rangle
$
&
$\displaystyle
\max_X\ \langle A,X\rangle
$
&
$\displaystyle
\max_X\ \langle A,X\rangle
$
&
$\displaystyle
\min_{X,U}\
\|X-A\|_F^2
+\theta\|X-UU^T\|_F^2
$
\\
\hline

Constraints
&
$
\begin{aligned}
&X\mathbf{1}_n
 =\frac{n}{K}\mathbf{1}_n,
\qquad X\succeq 0,
\\
&\operatorname{diag}(X)=\mathbf{1}_n,
\qquad X\geq 0
\end{aligned}
$
&
$
\begin{aligned}
&\langle X,E_n\rangle=\frac{n^2}{K},
\qquad X\succeq 0,
\\
&\operatorname{tr}(X)=n,
\qquad 0\leq X\leq 1
\end{aligned}
$
&
$
\begin{aligned}
&X\mathbf{1}_n=\mathbf{1}_n,
\qquad \operatorname{tr}(X)=K,
\\
&X\succeq 0,
\qquad X\geq 0
\end{aligned}
$
&
$
X\in\mathcal{P}_K
$
&
$
\begin{aligned}
&U^TU=I_K,
\\
&\|X_{\mathrm{off}}\|_0\leq\eta
\end{aligned}
$
\\
\hline\hline
\end{tabular}
}
\end{table*}

The preceding synthetic experiment demonstrates that the proposed
regularized projection matrix approximation can recover a cleaner
projection structure from noisy observations. We next evaluate the
clustering performance of the two proposed models, RPMA-S and
RPMA-NS, on real-world image data. Compared with the SDP and
spectral-projection baselines, RPMA-S directly estimates a rank-$K$
projection matrix and augments the projection-fitting objective with
an entrywise Huber penalty, which promotes a sparse and noise-robust
projection structure. RPMA-NS further extends RPMA-S by incorporating
a one-sided quadratic penalty on negative entries and a row-sum
penalty,
thereby encouraging entrywise nonnegativity and approximate
row-stochasticity in the recovered projection matrix.

For comparison, SDP-1~\cite{AMINI2018} and SDP-2~\cite{AMINI2018} are used as the SDP baselines in the
balanced-sample experiments. Since these formulations rely on
equal-cluster-size assumptions, they cannot be directly applied when
the class sizes are heterogeneous. Following the unequal-cluster
treatment of SDP-based $K$-means clustering in
\cite{zhuang2022sketch}, we construct a size-flexible variant, denoted
by SDP-U, by removing the equal-capacity restriction while retaining
the positive-semidefinite, entrywise nonnegative, trace, and row-sum
constraints of the standard $K$-means SDP. SDP-U is therefore used as
the SDP baseline in the imbalanced-sample experiments. Spectral
projection~\cite{belkin2003laplacian,vonLuxburg2007} and SLSA~\cite{zhang2022graph} are included in both settings. The objectives and
constraints of these baseline methods are summarized in
Table~\ref{tab:related_algorithms}.

\begin{table}[!t]
    \centering
    \caption{Summary of the real-world datasets.}
    \label{tab:real_datasets}
    \small
    \renewcommand{\arraystretch}{1.12}

    \begin{tabular*}{\columnwidth}{
        @{\extracolsep{\fill}}lccc@{}
    }
        \toprule
        Dataset
        & Classes
        & Samples/Class
        & Total Samples \\
        \midrule

        COIL20
        & 20
        & 72
        & 1,440 \\

        AT\&T Faces
        & 40
        & 10
        & 400 \\

        Semeion
        & 10
        & 155
        & 1,550 \\

        DIGIT-10
        & 10
        & 200
        & 2,000 \\

        \bottomrule
    \end{tabular*}
\end{table}

The real-world experiments involve four datasets covering object
images, facial images, and handwritten digits. Their main statistics
are summarized in Table~\ref{tab:real_datasets}.

\emph{COIL20:}
COIL20 contains grayscale images of 20 objects photographed from 72
equally spaced viewing directions, giving 1,440 images in total
\cite{nene1996coil20}. The continuous viewpoint variation produces
substantial within-class appearance changes and makes the dataset
suitable for testing whether a clustering method can preserve object
identity across different viewing angles.

\emph{AT\&T Database of Faces:}
The AT\&T Database of Faces contains 400 images of 40 individuals, with
10 images per subject \cite{samaria1994parameterisation}. The images
include variations in expression, pose, illumination, and facial
details. The small number of observations per class makes the clustering
performance sensitive to the quality of the recovered projection matrix.

\emph{Semeion:}
The Semeion handwritten-digit dataset contains 1,593 samples from digits
0--9, with each digit represented by a $16\times16$ binary image and
therefore a 256-dimensional feature vector \cite{uci1998semeion}. To
obtain a balanced clustering instance, we randomly select 155 samples
from each class, resulting in 1,550 samples in total.

\emph{DIGIT-10:}
DIGIT-10 is constructed from the UCI Multiple Features handwritten-digit
dataset \cite{duin1998multiplefeatures}. We use the pixel-average feature
view, in which each sample is represented by a 240-dimensional vector.
The dataset contains 10 digit classes with 200 samples per class, yielding
2,000 samples in total.

For COIL20, AT\&T Faces, and DIGIT-10, all available samples in the
selected representation are used. For Semeion, the largest equal-sized
subset considered in our experiments is used. In all cases, the
ground-truth class labels are used only to compute ACC, NMI, and ARI
after clustering and are not provided to any clustering method.

\subsubsection{Balanced-Sample Experiments}
\label{subsubsec:balanced_experiments}

We first evaluate the competing methods under the balanced-sample
setting. The experiments use an equal number of samples in every
class, namely 72, 10, 155, and 200 samples per class for COIL20,
AT\&T Faces, Semeion, and DIGIT-10, respectively. Clustering
performance is assessed using clustering accuracy (ACC), normalized
mutual information (NMI), and adjusted Rand index (ARI).

Under the balanced-sample setting, the final labels produced by
RPMA-NS are obtained through capacity-constrained rounding using the
known common cluster size. The ground-truth class labels are used only
for performance evaluation and are not involved in either the
optimization procedure or the label assignment. The quantitative
results are reported in Table~\ref{tab:balanced_results}.

\begin{table}[t!]
    \centering
    \caption{Clustering performance on real-world datasets.}
    \label{tab:balanced_results}
    \footnotesize
    \renewcommand{\arraystretch}{1.05}

    \begin{tabular*}{\columnwidth}{
        @{\extracolsep{\fill}}llccc@{}
    }
        \toprule
        Dataset & Method & ACC & NMI & ARI \\
        \midrule

        COIL20
        & SDP-1
        & \textbf{0.7271}
        & \textbf{0.8118}
        & \textbf{0.6762} \\

        & SDP-2 & 0.6784 & 0.7803 & 0.5966 \\
        & SLSA & 0.6799 & 0.7605 & 0.6031 \\
        & Spectral Projection & 0.6611 & 0.7833 & 0.6159 \\
        & RPMA-S & 0.6875 & 0.7817 & 0.6248 \\
        & RPMA-NS & 0.7028 & 0.7887 & 0.6436 \\

        \midrule

        AT\&T Faces
        & SDP-1 & 0.8225 & 0.9152 & 0.7677 \\
        & SDP-2 & 0.7800 & 0.8839 & 0.6861 \\
        & SLSA & 0.7375 & 0.8522 & 0.3902 \\
        & Spectral Projection & 0.8275 & 0.9144 & 0.7583 \\
        & RPMA-S & 0.8350 & 0.9212 & 0.7812 \\
        & RPMA-NS
        & \textbf{0.8675}
        & \textbf{0.9275}
        & \textbf{0.8169} \\

        \midrule

        Semeion
        & SDP-1 & 0.4542 & 0.4461 & 0.2997 \\
        & SDP-2 & 0.4852 & 0.4527 & 0.3135 \\
        & SLSA & 0.4510 & 0.4987 & 0.1560 \\
        & Spectral Projection & 0.5110 & 0.4679 & 0.3243 \\
        & RPMA-S & 0.5141 & 0.4657 & 0.3247 \\
        & RPMA-NS
        & \textbf{0.5651}
        & \textbf{0.5127}
        & \textbf{0.4003} \\

        \midrule

        DIGIT-10
        & SDP-1
        & \textbf{0.7590}
        & \textbf{0.7716}
        & \textbf{0.6594} \\

        & SDP-2 & 0.6560 & 0.6475 & 0.5353 \\
        & SLSA & 0.6465 & 0.7418 & 0.5458 \\
        & Spectral Projection & 0.6630 & 0.6370 & 0.5274 \\
        & RPMA-S & 0.7405 & 0.6664 & 0.5782 \\
        & RPMA-NS & 0.7145 & 0.6264 & 0.5318 \\

        \bottomrule
    \end{tabular*}
\end{table}

% Insert Table~\ref{tab:balanced_results} here.

Overall, the RPMA-S family exhibits consistently competitive performance
across the four datasets. RPMA-NS achieves the best results on
AT\&T Faces and Semeion and remains the second-best method on COIL20
for all three evaluation criteria. These results indicate that
incorporating nonnegative and stochastic structure can improve the
quality of the recovered clustering projection, particularly when
the affinity matrix is compatible with the assumed cluster structure.

The basic RPMA-S model also performs strongly. It substantially
outperforms classical spectral projection on several datasets and
remains especially competitive on DIGIT-10, where it achieves better
ACC, NMI, and ARI than the spectral baseline. The fact that RPMA-S
outperforms RPMA-NS on this dataset also suggests that the additional
structural constraints are not uniformly beneficial for every feature
representation and may become restrictive when the input affinity
deviates from an ideal block structure.

Taken together, these results support the central premise of the
proposed framework: directly regularizing a rank-$K$ projection matrix
provides an effective alternative to conventional spectral embedding.
RPMA-S offers a robust general formulation, while RPMA-NS can yield
further improvements when nonnegativity, stochasticity, and balanced
cluster structure are appropriate for the data.

\begin{table*}[!t]
    \centering
    \caption{Clustering performance under the imbalanced-sample setting.}
    \label{tab:imbalanced_results}
    \small
    \setlength{\tabcolsep}{5.2pt}
    \renewcommand{\arraystretch}{1.08}

    \begin{tabular}{@{}llccccc@{}}
        \toprule
        Dataset
        & Sampling ratio
        & SDP-U
        & RPMA-S
        & RPMA-NS
        & Spectral Proj.
        & SLSA \\
        \midrule

        \multicolumn{7}{c}{\textit{(a) ACC}} \\
        \midrule

        AT\&T Faces
        & 20\%
        & \textbf{0.7250}
        & 0.6875
        & 0.6875
        & 0.6875
        & 0.6500 \\

        AT\&T Faces
        & 40\%
        & \textbf{0.7125}
        & 0.6938
        & 0.7000
        & 0.6813
        & 0.6813 \\

        AT\&T Faces
        & 60\%
        & \textbf{0.7667}
        & 0.7542
        & 0.7625
        & 0.7333
        & 0.7292 \\

        AT\&T Faces
        & 80\%
        & 0.8156
        & 0.8156
        & \textbf{0.8344}
        & 0.8156
        & 0.7750 \\

        COIL20
        & 20\%
        & 0.6563
        & \textbf{0.7153}
        & 0.7118
        & 0.6458
        & 0.5451 \\

        COIL20
        & 40\%
        & 0.4757
        & 0.6667
        & \textbf{0.7205}
        & 0.6267
        & 0.5278 \\

        COIL20
        & 60\%
        & 0.2662
        & 0.6597
        & 0.6759
        & \textbf{0.6852}
        & 0.4942 \\

        COIL20
        & 80\%
        & 0.0677
        & 0.6866
        & \textbf{0.6953}
        & 0.6814
        & 0.4644 \\

        \midrule
        \multicolumn{7}{c}{\textit{(b) NMI}} \\
        \midrule

        AT\&T Faces
        & 20\%
        & 0.9138
        & \textbf{0.9194}
        & 0.9066
        & 0.8960
        & 0.8761 \\

        AT\&T Faces
        & 40\%
        & \textbf{0.9081}
        & 0.8858
        & 0.8783
        & 0.8899
        & 0.8742 \\

        AT\&T Faces
        & 60\%
        & \textbf{0.9068}
        & 0.9040
        & 0.8991
        & 0.8974
        & 0.8770 \\

        AT\&T Faces
        & 80\%
        & 0.9156
        & \textbf{0.9281}
        & 0.9175
        & 0.9235
        & 0.8935 \\

        COIL20
        & 20\%
        & 0.8035
        & 0.8355
        & \textbf{0.8446}
        & 0.7963
        & 0.7536 \\

        COIL20
        & 40\%
        & 0.6672
        & 0.8004
        & \textbf{0.8497}
        & 0.7991
        & 0.7249 \\

        COIL20
        & 60\%
        & 0.3596
        & 0.8296
        & \textbf{0.8349}
        & 0.8090
        & 0.6703 \\

        COIL20
        & 80\%
        & 0.0304
        & 0.7967
        & \textbf{0.8130}
        & 0.8033
        & 0.6061 \\

        \midrule
        \multicolumn{7}{c}{\textit{(c) ARI}} \\
        \midrule

        AT\&T Faces
        & 20\%
        & 0.5274
        & \textbf{0.5721}
        & 0.4877
        & 0.4491
        & 0.3719 \\

        AT\&T Faces
        & 40\%
        & \textbf{0.6832}
        & 0.6402
        & 0.6374
        & 0.6122
        & 0.5806 \\

        AT\&T Faces
        & 60\%
        & \textbf{0.7423}
        & 0.7140
        & 0.7101
        & 0.6843
        & 0.6670 \\

        AT\&T Faces
        & 80\%
        & 0.7722
        & \textbf{0.7992}
        & 0.7960
        & 0.7941
        & 0.7298 \\

        COIL20
        & 20\%
        & 0.7249
        & 0.7708
        & \textbf{0.7751}
        & 0.7120
        & 0.4998 \\

        COIL20
        & 40\%
        & 0.3023
        & 0.6485
        & \textbf{0.7248}
        & 0.6315
        & 0.3539 \\

        COIL20
        & 60\%
        & 0.0658
        & 0.6776
        & \textbf{0.6999}
        & 0.6575
        & 0.2079 \\

        COIL20
        & 80\%
        & 0.0503
        & 0.6545
        & \textbf{0.6762}
        & 0.6584
        & 0.1351 \\

        \bottomrule
    \end{tabular}
\end{table*}

\subsubsection{Imbalanced-Sample Experiments}
\label{subsubsec:imbalanced_experiments}

We next investigate the robustness of the competing methods when the
class sizes are heterogeneous. For each dataset, four sampling ratios,
$r\in\{20\%,40\%,60\%,80\%\}$, are considered. Let $N$ and $K$
denote the total number of available samples and the number of classes,
respectively. At sampling ratio $r$, the target subset size is
\begin{equation}\notag
    M_r=\left\lfloor rN\right\rfloor .
\end{equation}
A class-proportion vector is generated according to
\begin{equation}\notag
    (\pi_1,\ldots,\pi_K)
    \sim
    \operatorname{Dirichlet}
    \left(\alpha_{\mathrm{imb}}\mathbf{1}_K\right),
\end{equation}
and the resulting proportions are converted into integer class sizes
subject to
\begin{equation}\notag
    m_{\min}\leq m_k\leq N_k,
    \qquad
    \sum_{k=1}^{K}m_k=M_r,
\end{equation}
where $N_k$ is the number of available samples in class $k$. For each
sampling ratio, all methods are evaluated using exactly the same
sampled subset and affinity matrix.

Since the cluster capacities are unknown under the imbalanced
setting, capacity-constrained balanced rounding is no longer
applicable. Ordinary K-means is therefore used to discretize the
continuous representations produced by RPMA-S, RPMA-NS, spectral
projection, and SLSA. Consequently, the performance of RPMA-NS in
this experiment does not rely on the equal-size prior or on any
ground-truth cluster-capacity information.The quantitative results are reported in
Table~\ref{tab:imbalanced_results}.

Although SDP-1 is highly competitive in the balanced experiments,
the corresponding unbalanced formulation does not exhibit comparable
stability under heterogeneous class sizes. This behavior is especially
pronounced on COIL20. As the sampling ratio increases from $20\%$ to
$80\%$, the ACC of SDP-U decreases from $0.6563$ to $0.0677$, while
its NMI decreases from $0.8035$ to $0.0304$ and its ARI decreases from
$0.7249$ to $0.0503$. Hence, retaining more samples does not lead to
an improvement for SDP-U; instead, its performance deteriorates
sharply as the imbalanced instance becomes larger. This result
indicates that the modified SDP formulation remains sensitive to the
interaction between heterogeneous cluster sizes and its structural
constraints.

In contrast, RPMA-S and RPMA-NS maintain substantially more stable
performance over the four sampling ratios. On COIL20, RPMA-NS attains
the highest NMI and ARI at every sampling level, while its ACC remains
within a relatively narrow range from $0.6759$ to $0.7205$. RPMA-S also
preserves competitive ACC, NMI, and ARI values as the number of
retained samples increases. On AT\&T Faces, both variants remain
competitive with SDP-U and become particularly effective at the
largest sampling ratio. These observations suggest that regularized
projection recovery is less dependent on an accurately specified
cluster-size model and is therefore better suited to heterogeneous
sample distributions.

\begin{figure*}[!t]
    \centering

    % ==================== AT&T Faces ====================

    \subfloat[AT\&T Faces, 20\%]
    {
        \includegraphics[width=0.235\linewidth]
        {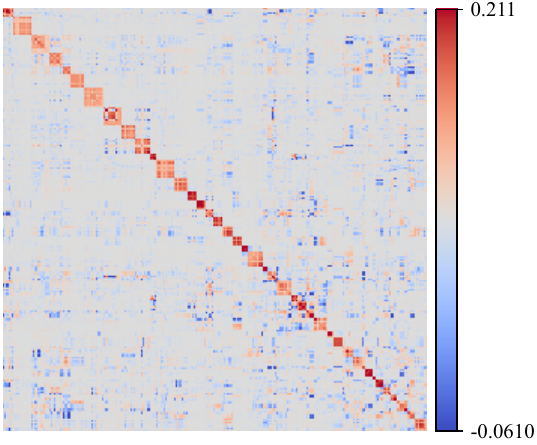}
        \label{fig:att_faces_rpma_20}
    }
    \hfill
    \subfloat[AT\&T Faces, 40\%]
    {
        \includegraphics[width=0.235\linewidth]
        {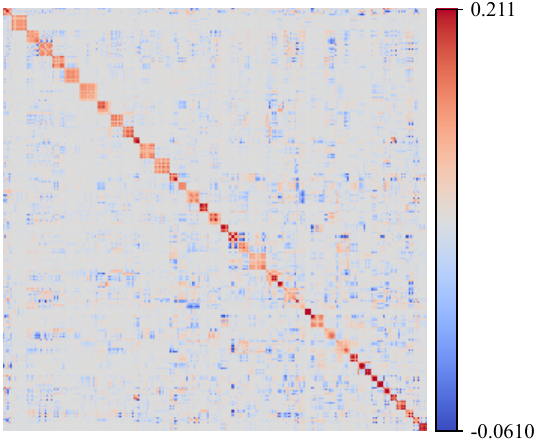}
        \label{fig:att_faces_rpma_40}
    }
    \hfill
    \subfloat[AT\&T Faces, 60\%]
    {
        \includegraphics[width=0.235\linewidth]
        {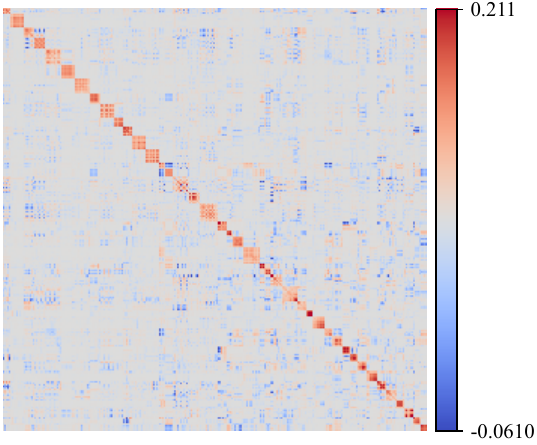}
        \label{fig:att_faces_rpma_60}
    }
    \hfill
    \subfloat[AT\&T Faces, 80\%]
    {
        \includegraphics[width=0.235\linewidth]
        {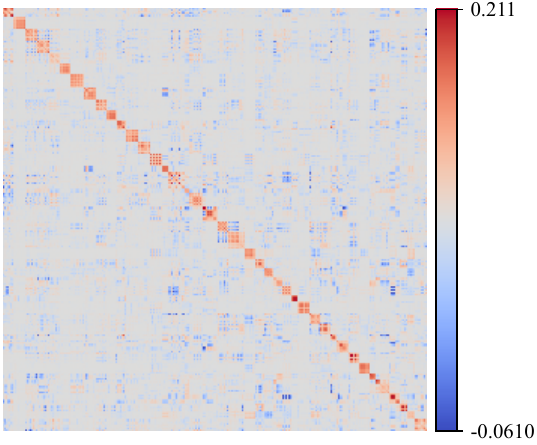}
        \label{fig:att_faces_rpma_80}
    }

    \vspace{1.5mm}

    % ====================== COIL20 ======================

    \subfloat[COIL20, 20\%]
    {
        \includegraphics[width=0.235\linewidth]
        {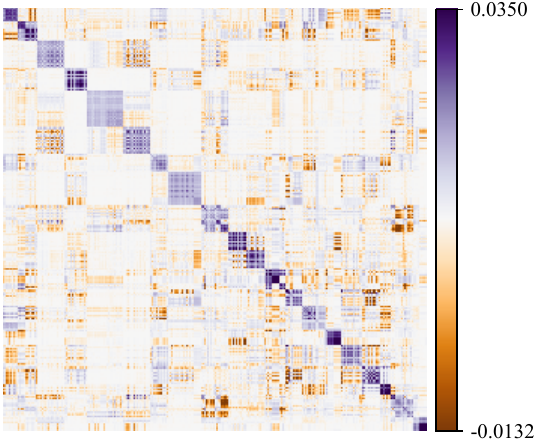}
        \label{fig:coil20_rpma_20}
    }
    \hfill
    \subfloat[COIL20, 40\%]
    {
        \includegraphics[width=0.235\linewidth]
        {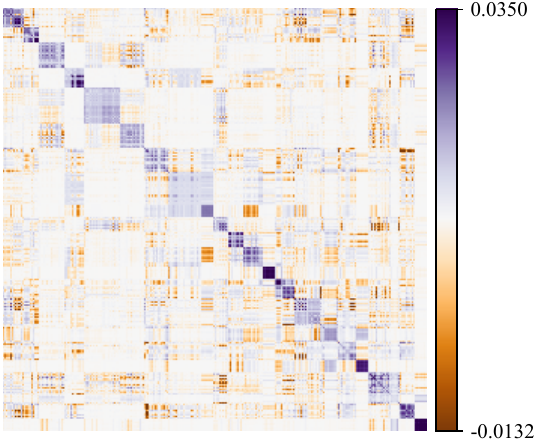}
        \label{fig:coil20_rpma_40}
    }
    \hfill
    \subfloat[COIL20, 60\%]
    {
        \includegraphics[width=0.235\linewidth]
        {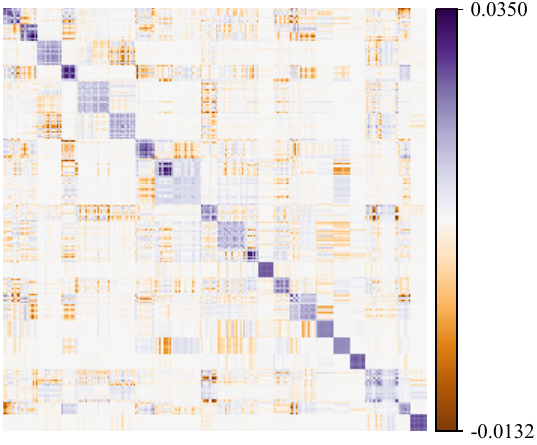}
        \label{fig:coil20_rpma_60}
    }
    \hfill
    \subfloat[COIL20, 80\%]
    {
        \includegraphics[width=0.235\linewidth]
        {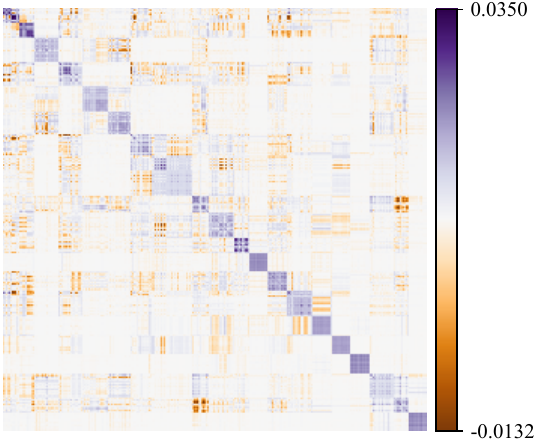}
        \label{fig:coil20_rpma_80}
    }

    \caption{Reordered RPMA-S projection matrices under different
    imbalanced sampling ratios. The first and second rows correspond
    to AT\&T Faces and COIL20, respectively. From left to right, the
    sampling ratios are 20\%, 40\%, 60\%, and 80\%. The four panels
    associated with the same dataset share a common color scale, while
    the color scales are normalized independently across the two
    datasets.}
    \label{fig:imbalanced_rpma_heatmaps}
\end{figure*}

Figure~\ref{fig:imbalanced_rpma_heatmaps} provides a qualitative view
of the projection structures recovered by RPMA-S. Across the four
sampling levels, the matrices retain a visible concentration of
large-magnitude entries around the reordered diagonal and preserve
recognizable blockwise patterns. In particular, the projection
structure does not collapse as the number of retained samples
increases. This visual evidence is consistent with the quantitative
results and further illustrates the stability of RPMA--S under unequal
class distributions.

\subsection{Discussion}
\label{subsubsec:real_data_discussion}

The balanced and imbalanced experiments reveal an important distinction
between the competing formulations. Under balanced class sizes, SDP-1
can exploit its equal-capacity assumption effectively and achieves
strong performance on several datasets. However, this advantage does
not transfer consistently to the unequal-size setting. Although SDP-U
relaxes the original size constraint, its sharp deterioration on
COIL20 demonstrates that the modified formulation can remain highly
sensitive to the class-size configuration and to the scale of the
sampled problem.

RPMA-S does not require an exact equal-capacity constraint. Instead, it
directly estimates a regularized rank-$K$ projection matrix whose
dominant structure encodes the latent partition. This formulation
provides competitive performance in the balanced experiments while
remaining stable when the class sizes become heterogeneous. RPMA-NS
further promotes nonnegativity and row-stochasticity, which are
consistent with the properties of an ideal clustering projection.
Its consistently strong NMI and ARI under the imbalanced COIL20
experiments indicate that these structural properties help suppress
spurious inter-cluster connections without relying on known cluster
capacities.

Overall, the results demonstrate that the main advantage of the RPMA-S
family lies not only in its competitive clustering accuracy, but also
in its robustness across substantially different sampling regimes.
RPMA-S provides a flexible general-purpose formulation, whereas RPMA-NS
offers additional structural regularization when the affinity matrix
is compatible with a nonnegative stochastic projection. Their stable
behavior under unequal class sizes provides a clear advantage over
methods whose performance depends more strongly on balanced-size
assumptions.

\section{Conclusion}
In this paper, we investigated the sparsed regularized projection matrix approximation (RPMA) framework and developed efficient optimization algorithms with reduced computational complexity. We established sufficient conditions for local optimality and characterized the stability of the regularized leading eigenspace and the critical-point landscape. In particular, our analysis reveals how the regularization parameter influences the geometry of the optimization landscape and the persistence of critical points under perturbation.

Several promising directions remain for future research. First, it would be worthwhile to investigate a broader class of regularization terms and their impact on clustering accuracy, robustness, and interpretability. Second, developing adaptive strategies for selecting regularization parameters in a data-driven manner is an important practical problem. Finally, it is of considerable theoretical interest to extend the current analysis beyond the sufficient conditions established in this paper, including understanding the behavior of RPMA when these conditions are violated and deriving sharper stability and landscape characterizations under weaker assumptions.

\bibliographystyle{IEEEtran}
\bibliography{Cmm}

\begin{IEEEbiographynophoto}{Zhuan Liang}
is currently a graduate student in the Department of Statistics, Faculty of Arts and Sciences, Beijing Normal University, Zhuhai.
He received the B.S. degree in Mathematics and Statistics from Shandong University. His current research interests include diffusion models and matrix computations.
\end{IEEEbiographynophoto}

\begin{IEEEbiographynophoto}{Zheng Zhai}
is currently an Associate Professor with the Department of Statistics, Faculty of Arts and Sciences, Beijing Normal University, Zhuhai. He received the Ph.D. degree in Computational Mathematics from Zhejiang University. He was a Postdoctoral Researcher with the National University of Singapore and the University of Toronto. His current research interests include diffusion models, matrix computations, optimization on manifolds, and manifold learning.
\end{IEEEbiographynophoto}

\end{document}